\newcommand{\probP}{\text{I\kern-0.15em P}} 
\newcommand{\indep}{\perp \!\!\! \perp} 
\spnewtheorem{thm}{Theorem}[section]{\bfseries}{\itshape}
\spnewtheorem{prop}{Proposition}[section]{\bfseries}{\itshape}
\spnewtheorem{fact}{Fact}[section]{\bfseries}{\itshape}
\begin{document}
\title{ {\huge Working Paper:} \\ Active Causal Structure Learning with Latent Variables: Towards Learning to Detour in Autonomous Robots}
\titlerunning{Working Paper: ACSLWL}
%
\author{Pablo de los Riscos\inst{1} \and
Fernando Corbacho\inst{1,2}}
\authorrunning{P. Riscos and F. Corbacho}
%
\institute{Cognodata R\&D, Paseo de la Castellana, 135 28046 Madrid, Spain \and Computer Engineering Department, Universidad Autónoma de Madrid, 28049 Madrid, Spain}
\maketitle              
%
\begin{abstract}
Artificial General Intelligence (AGI) Agents and Robots must be able to cope with everchanging environments and tasks. They must be able to actively  construct new internal causal models of their interactions with the environment when new structural changes take place in the environment. Thus, we claim that active causal structure learning with latent variables (ACSLWL) is a necessary component to build AGI agents and robots. This paper describes how a complex planning and expectation-based detour behavior can be learned by ACSLWL when, unexpectedly, and for the first time, the simulated robot encounters a sort of  “transparent” barrier in its pathway towards its target. ACSWL consists of acting in the environment, discovering new causal relations, constructing new causal models, exploiting the causal models to maximize its expected utility,  detecting possible latent variables when unexpected observations occur, and constructing new structures – internal causal models and optimal estimation of the associated parameters, to be able to cope efficiently with the new encountered situations. That is, the agent must be able to construct new causal internal models that transform a previously unexpected and inefficient (sub-optimal) situation,  into a predictable situation with an optimal operating plan.

\keywords{Artificial General Intelligence  \and Causal Discovery \and Theory of Surprise \and Dynamic Decision Network \and MEU \and POMDP \and Causal Artificial Intelligence.}
\end{abstract}
\section{Introduction}

In this paper, we start with an agent whose brain´s dynamic decision network (DDN) has been learned in an environment in which “transparent” barriers were not previously present \footnote{Transparent barrier: pailing fence with wide enough gaps that allow to see through, yet it does not allow the robot to pass through.}. Then, suddenly,  a sort of “transparent” barrier is introduced in the environment, on the pathway towards the agent´s target,  so that this becomes a new situation that the agent, initially, does not know how to efficiently handle. Then, the agent interacts within the environment and new unexpected observations take place. In order to cope with this new situation, the agent responds by generating a new hidden variable, that is causally related in the dynamic decision network with certain decision and chance variables. It also estimates the new conditional probability distributions related with this hidden variable. That is, the agent´s objective is to learn to adapt its knowledge (e.g. the transition probabilities) acquired during this learning process, so that, it will be able to later predict the effects of its actions in this new environmental situation. \\

We propose a new active causal structure learning with latent variables (ACSLWL) framework that incorporates several aspects. First, we introduce a new coefficient of surprise, for the agent to be able to recognize unprecedented unexpected situations. Then, we propose latent variable detection when a significant surprise in the utility function takes place.  It follows with hidden variable sub-graph structure learning to adapt the structure of the DDN graph to the new detected latent variable and its topological structure with respect to the already existing chance variables in the DDN.  The selection of candidates for the hidden variable´s children and parents is also based on the relevance of each particular chance variable on the coefficient of surprise for the overall probabilistic graph joint distribution. For this paper, we propose a “XM” topological structure that we will define later. Once, this new structure is in place, then,  hidden variable sub-graph parametric learning takes place to adjust the CPT parameters for its children nodes, as well as, the CPT parameters of the hidden variable depending on its parent nodes. In this paper, we use hard weighted EM for parameter estimation.

\subsection{Related Work}
The main thrust of the work presented in this paper is related to the topic of causality in machine learning \cite{Scholkopf22,Schölkopf2022b,Sch_lkopf_2019,Schölkopf2022} and, more specifically,  structure learning of causal models in the presence of latent variables, as it will be explained below. It is also related to the learning of behaviors in animals and autonomous robots \cite{Arbib2002,Arkin2005,Arkin1998a,Bekey2005,Corbacho2005,Corbacho1995,Kober2013,Nguyen2011}. 
Thus, our work is related, among other things,  to structure learning in Bayesian networks \cite{Kitson2023,villa-Blanco2022}, causal structure learning, structure learning with latent variables, and active structure learning.  Causal structure learning (CSL) \cite{Heinze-Deml2018} aims at learning the causal graph (where edges are interpreted as direct causal effects) that best describes the dependence and causal structure in a given data set. Causal models allow to make predictions under arbitrary interventions. CSL, in general, takes into account different possible interventions, such as do-interventions and additive interventions.  In this paper we focus on do-interventions that are selected by MEU. Causal sufficiency refers to the absence of latent variables \cite{Spirtes2001}. Yet, when possible latent variables are considered, there are two common options for the modeling of latent variables. They can be modeled explicitly as nodes in the structural equations, or they can manifest themselves as a dependence between the noise terms, where the noise terms are assumed to be independent in the absence of latent confounding \cite{Heinze-Deml2018}. In this paper, we chose the first option, that is, the latent variables are modeled explicitly by the introduction of new nodes in the graph. We define  these new nodes in the causal graph as hidden variables from the agent´s perspective, since they never get to be observed.\\ 

Elidan et al.\cite{Elidan2001,Elidan2007} describe structure learning with hidden variables. Elidan et al.\cite{Elidan2001}  discover hidden variables by a structure-based approach based on the detection of cliques. On the other hand,  Elidan et al.\cite{Elidan2007} identify hidden variables thanks to the ideal parent paradigm. First, they decide the existence of a new hidden variable by a subset (cluster) of variables that share a similar profile. Then, the children of the new hidden variable correspond to the variables in the cluster. Squires et al.\cite{squires2022} also learn the structure of a causal model in the presence of latent variables. They introduce latent factor causal models (LFCMs) and identify clusters of observed variables. Then, they merge clusters when necessary, and finally, they learn the edges from the observed variables to the latent variables. 
On the other hand, Sontakke et al. \cite{sontakke2021} propose causal structure learning with latent variables by learning optimal sequences of actions and discovering causal factors. They describe (directly unobserved) causal factors and formalize them by latent variables learned thanks to the causal curiosity paradigm. Transition functions are affected by a subset of the global causal factors. In their work, Causal POMDPs extend POMDP by explicitly modelling the effect of causal factors on observations. The uncontrollable portion of the state consists of the causal factors of the environment. They utilize the outcome of performing the experimental behaviors to infer a representation for the causal factor isolated by  the experiment in question. Their method, isolates the individual causal factors, which they assume are independent by algorithmic information theory implemented by MDL. In their framework, causal curiosity corresponds to an inherent reward that helps solving the optimization problem.\\

Tong \& Koller \cite{Tong2001b} describe active learning for structure in Bayesian Networks. That is, learning the causal structure of a domain more efficiently by selecting  the sequence of optimal queries. They define a (myopic) active learner $l$ as a function that selects a query $\mathbf{Q} := \mathbf{q}$ based upon its current distribution over $G$ (the DAG) and $\mathbf{\Theta}_G$ (the associated parameters)\cite{Tong2001b}. They define a measure for the quality of the distribution over graphs and parameters  and use it to evaluate the extent to which various instances would improve the quality of the distribution, thereby, providing a means to select the next query to perform. Given a query they define the expected posterior loss of the query based on the edge entropy loss function and minimize the Expected posterior Loss (MEL).

\section{Technical preliminaries}
In this section, we introduce some concepts that will set the theoretical background for the rest of the work.

\subsection{Partially Observed Markov Decision Process (POMDP)}
A Markov decision processes (MDP) is a mathematical model used to describe the interactions between an agent and its environment. The model is based on the assumption that the agent is able to fully observe the environment, which allows the agent to be aware of its exact state at all times. Furthermore, when combined with the Markov assumption on the transition model, it ensures that optimal policies are always dependent on the current state.
Yet, in general, agents are not able to completely observe the environment. Rather, they observe it partially, which means that the agent will not know exactly the state in which it is at any given moment. This aspect makes the problem considerably more complex to solve, as the actions indicated by the optimal policy will no longer necessarily be the most beneficial for the agent, and the utilities will not necessarily be the most optimal for the agent. Moreover, the utilities of each state (s) and the optimal actions taken in response to each state will no longer depend solely on the state itself, but also on the degree to which the agent believes they are in that state.
This introduces the concept of a Partially Observable Markov Decision Process (POMDP), which models the interactions of an agent with an environment that is partially observable. This formal concept shares the same characteristics as a Markov Decision Process (MDP), but with the addition of two elements to address the issue of partial observation of the environment.
\begin{definition}
    A time-discrete POMDP is formally defined as a 7-tuple $(S,A,T,R,\Omega,O,\gamma)$, where:
\begin{itemize}
    \item $S$ is the set of states that represent all possible configurations of the environment.
    \vspace{2mm}
    \item $A$ is the set of actions that the agent can take.
    \vspace{2mm}
    \item $T$ models the conditional probability distribution of transitioning to state $s^\prime$ from state $s$ after the agent performs action $a$, $\probP(s^\prime |s,a)$.
    \vspace{2mm}
    \item $R \colon (A,S) \rightarrow \mathbb{R}$ is the utility function, which states the reward the agent receives for executing action $a$ and being in state $s$
    \vspace{2mm}
    \item $\Omega$ is the set of possible observations that the agent can perceive from the states of the environment.
    \vspace{2mm}
    \item $O$ models the conditional probabilities of observing $o$ given that action $a$ was taken and the resulting state is $s^\prime$,  $\probP(o|s^\prime,a)$.
    \vspace{2mm}
    \item $\gamma \in [0,1)$ is a factor that discounts future rewards, reflecting the preference for immediate rewards over future ones.
\end{itemize}
\end{definition}

\subsection{Dynamic Decision Networks (DDN)}
Just as it is necessary to work within a framework that allows modeling the interactions of the agent with the environment, it is also necessary to have a tool that allows representing and working with POMDP in a simpler and structured way. In this work, we will use dynamic decision networks, a probabilistic graphical model that extends decision networks to deal with the temporal relationships between observation and action variables and to come up with optimal policies.\\

\begin{definition}
    A dynamic influence diagram (DID) or dynamic decision network (DDN) is a probabilistic graphical model consisting of a tuple $\mathcal{DDN}=(G,P)$, where $G$ is a directed acyclic graph (DAG) and $P$ the parameters modeling the joint probability distribution determined by the relationships given in $G$.\\

    The DAG $G=(Z,E)$ has three different types of nodes:
    \begin{itemize}
        \item Chance nodes, which represent the random variables in the model. Each chance node has an associated probability distribution  $\probP(X |Pa(X))$ where $Pa(X)$ is the set of parents of $X$ in the graph . The set of chance nodes is denoted by $\mathcal{X}$.
        \vspace{1mm}
        \item Decision nodes, which represent the action variables and whose values are chosen by the agent. The set of decision nodes is denoted by $D$. 
        \vspace{1mm}
        \item Utility nodes, which represent the agent´s utility functions. These nodes will have no children in the graph $G$, and their parents will be those chance or decision nodes that influence the value of the utility function. The set of utility nodes is denoted by $\mathcal{U}$.
        
    \end{itemize}
    Furthermore, the DDN $G$ can be divided into two different DNs:
    \begin{enumerate}
        \item A decision network $DN_0=(G_0,P_0)$, that models the relations between variables in a fixed point in time and the underlying joint probability distribution.
        \item A decision network $DN_{\rightarrow}=(G_{\rightarrow},P_{\rightarrow})$, showing the temporal relationships between the variables and the transition probability distribution \\
        $P(X_{1;t},\dots,X_{n;t}|X_{1;t-1},\dots,X_{n;t-1}) $.  
\end{enumerate}
    
    It is assumed that the changes occur between discrete time moments, which will be indexed by a non-negative integer. The index representing time will be denoted by $t \; \in \{1,\dots ,T\}$ and the notation $X_{i;t}$ will be used for the value of the variable $X_i$ at time $t$.
\end{definition}

It is important to note that there is an important limitation when representing a POMDP with DDNs, and it is that DDNs usually consider a finite number of time moments, so that in reality the problem being represented is a finite horizon POMDP.\\

When calculating the optimal values of decision nodes, given a set of evidence over the chance nodes, this framework follows the principle of Maximum Expected Utility (MEU). This principle is based on the fact that an agent is rational when it takes those actions that produce the highest expected utility, averaged over all possible values of the actions. Therefore, our agent will always choose to take those actions that have the highest expected utility, given some evidence on the chance nodes.
The reason we choose to work with DDNs is that this tool allows us to check, after a decision has been made, whether the values of the chance nodes in the following time moment are the expected ones or they are not, and let the agent act accordingly.

\subsection{Causality}
\subsubsection{Causal relations between random variables}

The term ``causality'' has been defined in various ways, and there is currently no consensus on a single unified definition. In this work, we will examine the relationship between two random variables, $X$ and $Y$, which may be binary, categorical, or continuous.
The three most commonly used definitions for causality are as follows:
\begin{itemize}
    \item \textbf{Counterfactual Causality:} $X$ is a cause of $Y$ if specific values of $X$ and $Y$ were observed ($X=x$, $Y=y$), yet if a different value of $X$ had been observed, ($X=x^\prime$), then the value of $Y$ would be different, ($Y=y^\prime$).
    \vspace{2.5mm}
    \item \textbf{Interventional Causality:} $X$ is a cause of $Y$ if, by making the variable $X$ take a certain value, $X=do(x)$, then we have that:
    \[P(Y|X=do(x))=P(Y|X=x),\]
    whereas this is not the case for $Y$:
    \[P(X|Y=do(y)) \neq P(X|Y=y).\]
    \item \textbf{Mechanistic Causality:} $X$ is a cause of $Y$ if there exists a function $f$ such that $Y:=f(X,\mathcal{N})$, that enables the generation of $Y$ values based on $X$ values and a random noise $\mathcal{N}$.
\end{itemize}

\noindent These three definitions are related but not entirely equivalent. This work will focus on the definition of mechanistic causality, thus, it will be assumed that underlying mechanistic functions exist whenever two random variables are causally related.
Based on the mechanistic causality definition, all possible causal relations between a pair of random variables can be classified as follows:

\begin{definition} Causal relations between a pair of random variables.\\
Given a pair of random variables $X$ and $Y$, there are four potential relationships between them:
    \[\left\{ \begin{array}{l}
    R=X \xrightarrow{} Y \Rightarrow  \exists \;f  \wedge \exists \; \mathcal{N}_y \;\; | \;   Y:=f(X,\mathcal{N}_y) \\ \\ 
    
    R=X \leftarrow{} Y \Rightarrow  \exists \;f  \wedge \exists \; \mathcal{N}_x \;\; | \;   X:=f(Y,\mathcal{N}_x)  \\ \\ 
     
    R=X \leftrightarrow Y \Rightarrow  \exists \;f,g  \wedge \exists \; \mathcal{N}_x,\mathcal{N}_y,Z \;\; | \;   X:=f(Z,\mathcal{N}_x) \wedge Y:=g(Z,\mathcal{N}_y)\\ \\
    
    R=X \perp Y \Rightarrow  X \text{ and } Y \text{are independent, there is no function that relates them to each other }
    
    \end{array} \right.\]

\end{definition}

\subsubsection{Causal relations between actions and random variables}
\noindent In the previous section, we described the causal relationship between two random variables. However, when we have a random variable and an action variable, it is necessary to qualify the concept of causality. To study this causal relationship, we will assume that the only possibilities for a random variable, $O$, and an action variable, $D$, are:
\[\left\{ \begin{array}{l}
    R=D \xrightarrow{} O \Rightarrow  \text{The activation function of $D$ influences the values that $O$ takes} \\ \\ 
    
    R=D \perp O \Rightarrow \text{The activation function of $D$ does not influence the values that $O$ takes}
    
\end{array} \right.\]

The reason why the relationships $O \xrightarrow{} D$ and $O \leftrightarrow D$  will not be considered is because this work will focus on the causality from the point of view of an agent. In this context, the action variables are not random, instead, they are the result of optimizing a utility function from the perspective of the agent. In other words, the cause of an action variable taking a value can be attributed to the agent's own behavior in response to the random values observed and the utility function being optimized and not the values of the random values itself.
However, from an external point of view outside the agent, both relationships $O \xrightarrow{} D$ and $O \leftrightarrow D$ could also be considered. In this situation, the utility function and the maximization principle of utility is unknown, making the agent's behaviour random a priori. These relations would outline the utility function of the agent and how it selects the values of the actions to maximize the utility.

\subsubsection{Causal relations through time}
Finally, we will work with random variables whose values changes over time, so it is also important to discuss how time is involved in the causal relations between variables.
In the case of two random processes, $X_1$ and $X_2$, the Granger causality concept is often used. A stochastic process $X_1$ is a G-cause of the stochastic process $X_2$ if past values of $X_1$, in addition to past values of $X_2$, contain information that helps predict current or future values of $X_2$.
This definition holds whenever the following assumptions are met:
\begin{enumerate}
    \item Causes always occur before the effects.
    \item Causes contain unique information about the values that the effects will take in the future.
\end{enumerate}

In the case of temporal causal relationships between actions and random variables, we will also use Granger causality, taking into account the same detail of the agent's point of view, so that the only possible relationships are $Act \xrightarrow{G-cause}Obs$ or not. Note that in this context the two assumptions of Granger causality are also satisfied.

\subsection{Structure Learning}
One of the main problems in using Bayesian networks or Decision networks is that in real problems the structure and parameters of the network modeling the data are usually not available. This leads to the need to rely on expert knowledge to build and use these models.
This has led to two main areas, parametric learning and structural learning from a data set. This section will focus on describing the problem of structural learning and the difficulties that need to be faced in order to provide a solution.

\subsubsection{Problem definition and challenges}
Given a set of variables $\mathcal{X}$, and a set of fully observed data $\mathcal{D}$, the goal is to learn the graph $\mathcal{G}^*$ that models the conditional relations between the variables, corresponding to the joint probability distribution $\probP(\mathcal{X})$ underlying $\mathcal{D}$.\\

First, note that the structural learning problem is NP-hard. This is because the size of the space of possible DAGs of $n$ nodes, $|\mathcal{G}_n|$, is given by the following recursion:

\begin{equation}
    \begin{split}
        |\mathcal{G}_n|=\sum_i^n (-1&)^{i+1}\binom{n}{i}2^{i(n-i)} |\mathcal{G}_{n-i}| \;\;\;\; n>2,\\
        |&\mathcal{G}_0|=|\mathcal{G}_1|=1,
    \end{split}
\end{equation}

that grows super-exponentially with the number of nodes.
On the other hand, in order to understand another one of the challenges faced by structural learning, two definitions must first be taken into account:
\begin{definition}
    $I$-equivalence\\
    Two DAGs \textit{H}, $K$ are $I$-equivalent if they have the same set of nodes $\mathcal{V}$ and represent the same conditional independence relations, i.e 
    \begin{equation}
        (\mathcal{X} \indep \mathcal{Y} | \mathcal{Z})_H \Leftrightarrow (\mathcal{X} \indep \mathcal{Y} | \mathcal{Z})_K \;\;\;\;\; \forall \mathcal{X},\mathcal{Y},\mathcal{Z}\subset \mathcal{V}
    \end{equation}
\end{definition}
\begin{definition}
    $D$-map, $I$-map and perfect map\\
    Considering a joint probability distribution $P$ and a graph $G$ it is said that:
    \begin{enumerate}
        \item $G$ is a $D$-map of $P$ if all conditional independence relations of $P$ are shown in $G$, but all of $G$ do not necessarily occur in $P$.
        \vspace{1mm}
        \item $G$ is an $I$-map of $P$ if all conditional independence relations shown in $G$ occur in $P$, but all those in $P$ are not necessarily shown in $P$.
        \vspace{1mm}
        \item $G$ is a perfect map of $P$ if $G$ is a $D$-map and $I$-map of $P$ at the same time, meaning that all relations of $P$ are shown in $G$, and all relations of $G$ are given in $P$.
    \end{enumerate}
\end{definition}

These definitions state that we can only aspire to learn a graph $\hat{\mathcal{G}}$ that is I-equivalent to the real $\mathcal{G}^*$, since if $\mathcal{G}^*$ is a perfect map of $\probP(\mathcal{X})$, then all graphs I-equivalent to $\mathcal{G}^*$ will also be perfect maps of $\probP(\mathcal{X})$.
Therefore, passive structural learning, in general, does not have an unique solution. This is referred to as $\mathcal{G}$ not being identifiable from the data set $\mathcal{D}$.

\subsubsection{Structural learning with latent variables}\label{sect:Aprendizaje con vars latentes}
As we have already seen, the issue of structural learning with fully observed data is inherently intricate. The problem becomes even more complex when structural learning with latent variables is considered. These are some factors that contribute to this complexity:

\begin{enumerate}
    \item \textbf{Detection of latent variables:} As their name suggests, latent variables are those that cannot be directly observed. Consequently, it is essential to identify the number of latent variables that may be influencing the observed ones, and to what degree they influence them.
    \vspace{1mm}
    \item \textbf{Cardinality of latent variables:} Since they cannot be observed, the cardinality or domain of the latent variable is also unknown. In general, it is typically assumed that latent variables are discrete and their cardinality is determined by algorithms based on complexity reduction or likelihood maximization in the data set.
    \vspace{1mm}
    \item \textbf{Introduction of latent variables in the structure:} Assuming that we know how to detect the latent variables and how many are relevant, the next step, apart from discovering their cardinality, would be to learn how they are related to the rest of the observed variables, and even among the latent variables. This is necessary in order to introduce these variables in the structure that represents the conditional dependence relationships in an appropriate way.
\end{enumerate}

\begin{fact}
    In the literature, the terms latent variable and hidden variable are often mixed, referring to the concept of unobserved variable. However, in this paper, the difference between the concept of latent variable and hidden variable is as follows:
    \begin{itemize}
        \item A \textbf{latent variable} $LV$, is a variable unobserved by the agent and placed naturally in the environment, whether or not it is relevant to the problem faced by the agent.
        \vspace{1mm}
        \item A \textbf{hidden variable} $HV$, is considered to be a representation of one or several latent variables generated by the agent with the intention of modeling the latent variables. Since the latent variables cannot be fully observed, the agent needs to model or estimate them. Generally, the agent will only want to model those latent variables that are relevant to find a better solution to the problem the agent faces.
    \end{itemize}
\end{fact}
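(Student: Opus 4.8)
The bracketed statement is a definitional convention rather than a proposition with logical content to be derived, so my ``proof'' would amount to verifying that the proposed latent/hidden distinction is well posed, non-vacuous, and consistent with the modeling choices already fixed earlier in the paper. The plan is to establish three things: (i) that the two notions are genuinely distinct, i.e.\ not coextensive in either direction; (ii) that the association between a hidden variable $HV$ and the latent variable(s) $LV$ it represents is well defined as a many-to-one (or many-to-many) correspondence; and (iii) that inserting $HV$ as an explicit node in the DDN graph $G=(Z,E)$ is compatible with the mechanistic-causality framework and the DDN definition of Section~2.

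For distinctness I would exhibit the separating cases directly from the agent/environment dichotomy already used when defining causality ``from the point of view of an agent.'' On one side, the environment may contain a latent variable $LV$ that never enters any mechanistic function $Y:=f(X,\mathcal{N})$ feeding a utility node; by the MEU criterion the agent then has no incentive to model it, so there is an $LV$ with no corresponding $HV$ — this realizes the ``whether or not it is relevant'' clause. On the other side, a single $HV$ may be posited by the agent to summarize several correlated latent causes at once (here, a ``transparent barrier'' together with whatever environmental regularities co-occur with it), yielding an $HV$ that is in bijection with no single $LV$. Together these witness that neither concept includes the other, so the distinction is not vacuous. For well-definedness of the $HV\mapsto LV$ correspondence I would appeal to the mechanistic-causality assumption: since every causal link is taken to carry an underlying function, the node $HV$ the agent inserts is by construction an estimator (here, the hard-weighted-EM estimate) of the joint influence of the latent causes on $HV$'s designated children, and the relevance-based candidate selection described in the Introduction fixes which children and parents attach to $HV$ once the surprise signal localizes where the new node is needed. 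Consistency is then immediate: because the paper has committed to representing latent variables explicitly as nodes rather than as correlated noise terms, $HV$ occupies a legitimate position in $Z$ with its own CPT $\probP(HV\mid Pa(HV))$ and induces CPTs on its children, so no clause of the DDN definition is violated; the only operational difference is that $HV$, never being observed, is never added to the evidence set when MEU is computed.

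I expect the main subtlety — rather than a genuine obstacle — to be the degenerate regime in which the distinction threatens to collapse: if the agent could in principle observe $LV$, or if exactly one relevant latent cause exists and is recovered exactly by EM, then $HV$ and $LV$ coincide extensionally. I would resolve this by observing that the definition is intensional — it concerns the \emph{origin} of the variable (placed by the environment versus generated by the agent as a model of the environment), not the values it happens to take — so extensional coincidence in a fortunate case does not merge the two roles, and the standing assumption that these nodes ``never get to be observed'' rules out the observability escape. Hence the convention is coherent and the \textbf{Fact} stands as a stipulative definition with the separating examples above as its justification.
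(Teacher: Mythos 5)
You are right that this ``Fact'' is a stipulative terminological convention, and the paper accordingly offers no proof of it — it is simply asserted as a definition distinguishing environment-side latent variables from agent-generated hidden variables. Your identification of it as such, together with your coherence check (non-coextensiveness, compatibility with the explicit-node modeling choice and the ``never observed'' assumption), is consistent with how the paper uses these terms, so there is nothing to correct.
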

\subsubsection{Causal structure learning}
In addition to Bayesian Networks (BN) and Decision Networks (DN), there are extensions to these models, such as Causal Bayesian Networks (CBN) and Causal Decision Networks (CDN), which enable the representation of causal relations instead of association relations. The difference between these probabilistic models lies in the assumption that directed edges are causal. This implies that the structure of the model can only be represented by a single DAG structure. In contrast, a BN or DN can be represented by any DAG structure that belongs to its corresponding Markov equivalence class of DAG structures, as discussed at the beginning of this section.
At first glance, this could be thought of as an advantage, because this partially solves the identifiability problem, offering an unique solution for the problem (in the real world, there is only one way in which variables are causally related). However, the study of causal relationships is currently a relatively underdeveloped and challenging field of study, making the problem even more complex than structure learning of BNs or DNs \cite{Glymour2019,Peters2017,Spirtes2001}.  \\

The principal motivation to deal with this problem is that CDN, apart from been able to do predictive and diagnostic inference, are also capable of performing interventional and counterfactual inference. An agent that works with a CDN could address the three levels of the ladder of causation \cite{Pearl2000}. The first step of the ladder focuses on what an agent can learn from the associations alone; the second step focuses on the simulation of hypothetical interventions to measure their effect without the need to perform experiments; and the third step focuses on answering counterfactual questions about alternative actions that could have been taken in the past. These three steps, together with active learning, are fundamental for an AGI agent, as they will allow him to be even more autonomous and explanatory.
In this work we will work with Causal Dynamic Decision Networks (CDDN), an extension of Dynamic Decision Networks (DDN) in which the arcs represent causal relationships. However, throughout the paper, we will refer to CDDNs as DDN.

\section{Environment and Agent Formalization}

\subsection{Environmental selection motivation}
The problem we are going to face is the following: we are going to put a robot that has been trained to reach a target into a new environment where there is a barrier formed by spikes that prevents it from moving towards the target, but still allows it to observe it.
The barrier formed by spikes will be unknown to the agent, since it has been trained in an environment where the barrier did not exist. Therefore, the latent variable will be the presence or absence of areas where the agent cannot pass through the barrier. To avoid complications, the distance between the spikes will be equal and smaller than the width of the agent, so that the agent cannot pass between them.
Thus, this problem contains all the essential elements to test our framework, since the appearance of the barrier will cause the agent to change its behavior in order to reach the goal. In the framework proposed, this is done by an internal structural change.
The motivation for this environment comes from the study of Corbacho et al \cite{Corbacho2005,Corbacho1995}, where they experimented with frogs that were given a prey inside an enclosure and between them a barrier of sticks separated in such a way that the frogs could not pass but could see the prey. At the beginning of the experiment, the frogs collided with the barrier and tried to go through the sticks in order to cross it, but since they were unable to do so, they eventually developed a new behavior in which they learned to go around the barrier before trying to reach the prey. On a broader context, learning to detour has become a central paradigm in the understanding of animal cognition \cite{Kabadayi18}.

\subsection{Environment description}
The environment consists of a two-dimensional space constrained by a maximum value of x and y coordinates, resulting in a square-shaped boundary. In addition, there is a barrier that will prevent the agent from directly reaching the target, forcing it to detour around the barrier in order to reach the target. Thus, the set of points which the agent can move within is defined by: 
\begin{equation}
  S=\{(x,y) \in \mathbb{R}^2 \; | \; x \in [0,10] \;\& \; y \in [0,15]\}.  
\end{equation}

Finally, there is a set of environment variables, from which the set of the agent´s observation variables is derived. Also, the agent will be able to modify the value of some of these variables through its actions. These variables are:

\begin{itemize}
    \item $Target \; position$: $(x,y)$ coordinates of the Target. They are set to be $(x_T,y_T)=(10, 7.5)$ always.
    \vspace{2.5mm}
    \item $Agent \; position $: $(x,y)$ coordinates of the center of the Agent, initially they are $(x_A,y_A)=(1,7.5)$.
    \vspace{2.5mm}
    \item $Agent \; orientation$: Orientation where the Agent is looking at. In this paper, it will be $0$ always.
    \vspace{2.5mm}
    \item $Agent \; Width$ and $Agent \; Shape$: These variables indicate the shape and width of the agent. A square of side $0.75$ has been chosen for the agent.
    \vspace{2.5mm}
    \item $Barrier \; exist$: Boolean variable indicating whether or not the environment has a barrier.
    \vspace{2.5mm}
    \item $Start\; Barrier$ and $End \; Barrier$ coordinates: Coordinates of the beginning and end of the barrier. The barrier will consist of a segment between these points, made of spikes. The coordinates are $Start_B= (4.5,1.5)$ and $End_B=(4.5,15)$.
    \vspace{2.5mm}
    \item $Spikes \; separation$ and $spikes \; length$: Separation of the barrier's spikes and their length. We set $Spikes\;separation=0.5$ to prevent the agent from passing through the barrier due to its width, and $spikes \;length =0.5$ defines the length of each spike.
\end{itemize}

\begin{figure}[ht!]
        \centering
        \includegraphics[width=0.5\textwidth]{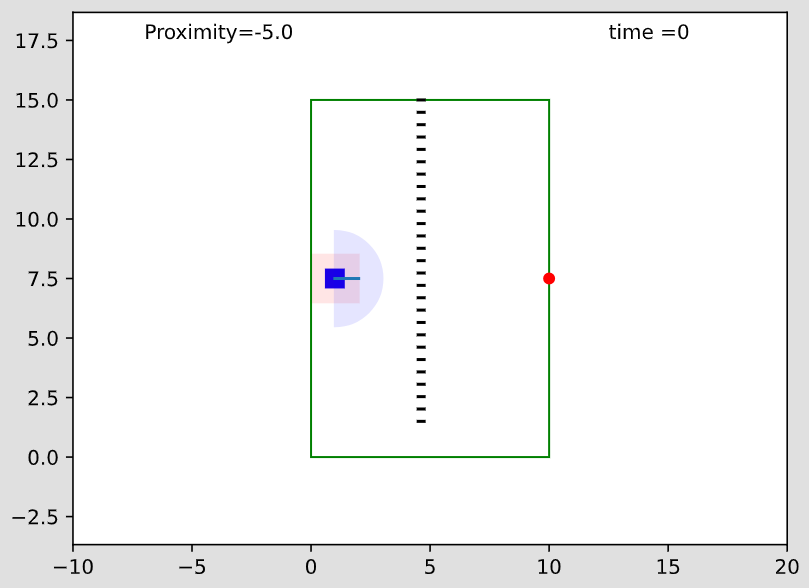}
        \caption{Initial conditions. The agent is displayed by a blue square and the target by a red dot. The pailing fence is placed in between. }
        \label{fig:Initial_conditions}    
\end{figure}
\begin{table}[ht!]
    \centering
    \captionof{table}{This tables shows the initial values of the environmnet variables}
    \begin{tabular}{cc}
            \hline
            \textbf{Environment variables} & \textbf{Initial Values} \\ \hline
            $Target \; position$                & $(10, 7.5)$                  \\
            $Agent \; position$                 & $(1,7.5)$                    \\ 
            $Agent \; orientation$              & $0$                          \\
            $Agent \; width$                    & $0.75$                       \\ 
            $Agent \; shape$                    & square                   \\ 
            $Barrier \; Exist$                  & Yes
            \\ 
            $Start \; Barrier$                  & $(4.5,1.5)$                  \\
            $End \; Barrier$                    & $(4.5,15)$                   \\ 
            $Spikes \; separation$              & $0.5$                        \\ 
            $Spikes \; length$                  & $0.5$                        \\ \hline
    \end{tabular}
\end{table}

More secondary details about how the interactions between the agent and the environment are implemented, are included in the Technical Appendix \ref{appendix:Technical Appendix}.

\subsection{Actions and Observations Space}

The agent´s actions space is the following:
\paragraph{$Step \; Forward$:} This action will allow the agent to move forward in the orientation it is facing. The decision domain of this action variable is $[0,2.5]$ and the mechanistic function behind it is the following:
\begin{equation}
    \begin{split}
        Step \; forward&\colon \mathbb{R}^2 \times [-\pi,\pi] \times[0,2.5] \longrightarrow \mathbb{R}^2\\
        \;\;\;\;SF(x,y,\alpha,s)&=(x,y)+s\cdot(\cos{\alpha},\sin{\alpha}),
    \end{split}
\end{equation}

\noindent where $(x,y)$ are the coordinates $Agent \; position$, $\alpha$ is the value of $Agent \; orientation$ and $s$ the number of steps taken by the agent.

\paragraph{$Step \; Aside$:} This action allows the agent to move perpendicular to the direction in which it is facing. The decision domain is $[-2.5,2.5]$, negative values are for moving to the left and positive values for moving to the right. The mechanistic function behind it is the following:
\begin{equation}
    \begin{split}
        Step \; Aside&\colon \mathbb{R}^2 \times [-\pi,\pi] \times[-2.5,2.5] \longrightarrow \mathbb{R}^2\\
        SA(x,y,\alpha,s)=&\left\{ \begin{array}{ll} (x,y)+s(\cos{(\alpha}-\frac{\pi}{2}),\sin{(\alpha-\frac{\pi}{2})}) & \text{si} \; s < 0 \\ \\ (x,y)+s(\cos{(\alpha+\frac{\pi}{2})},\sin{(\alpha+\frac{\pi}{2}}))& \text{si} \; s\geq 0   \end{array} \right.
    \end{split}
\end{equation}
It's important to note that while the agent doesn't know these functions, it will generate a probabilistic distribution over the observation space when a specific decision is made.\\

The agent´s observation space consists of the following variables:
\begin{itemize}
    \item $Depth$: It corresponds to the Euclidean distance between the agent and the target.
    \begin{equation}
        D=\sqrt{(x_T-x_A)^2 + (y_T-y_A)^2}
    \end{equation}
    \item $Heading \; angle$: This variable measures the angle between the agent and the target, using the agent coordinates as the reference system.
    \begin{equation}
        HA = \arctan\left(\frac{y_T-y_A}{x_T-x_A}\right)
    \end{equation}
    \item $Barrier \; Tactile$: This is a binary variable that is $1$ if the infinity norm between the center of the agent and any of the spikes of the barrier is lower than $2$, and $0$ otherwise.
    \begin{equation}
        BT= \left\{ \begin{array}{lc} 1 & \text{if} \; \underset{s \in spikes}{\max} ||(x_A,y_A)-(x_s,y_s)||_{\infty} \leq 2  \\ \\0 & \text{otherwise }   \end{array} \right.
    \end{equation}
    \item $Target \; in \; visual \; Field$: A binary variable, it will be $1$ if $Depth$ is lower than $2$ and $Heading \; angle$ is between $[-\frac{\pi}{2},\frac{\pi}{2}]$.
    \begin{equation}
        TVF(D,HA)= \left\{ \begin{array}{lc} 1 & \text{if} \; D\leq 2 \text{ and } HA \in [-\frac{\pi}{2},\frac{\pi}{2}]  \\ \\0 & \text{otherwise }   \end{array} \right.    
    \end{equation}
\end{itemize}

Since we will use a Dynamic Decision Network (DDN) to represent the agent´s knowledge, the observation and action variables must be discrete, so the agent will convert the continuous values of the observation variables to discrete values. It also needs to transform the discrete decision values taken from the action variables and select a continuous decision value. For more information about the discretization and transformation from discrete to continuous actions, we refer the reader to the Technical Appendix \ref{appendix:Technical Appendix}.

\subsection{Agent´s initial Dynamic Decision Network}

For the sake of simplicity, in this paper, we assume an initial DDN (Figure \ref{fig:Initial_DDN}) that allows the agent to perform simple target approach and simple obstacle avoidance behaviors.
The structure displayed in Figure \ref{fig:Initial_DDN} was chosen to avoid making any strong assumptions about the Learning to Detour problem. 
We refer to \cite{Paper12024a} for details on how this DDN can also be learned from scratch by interacting with the environment with the algorithms that will be introduced in section \ref{sect:Causal Discovery}. 
However, for sake of simplicity, this paper focuses on the aspects directly related to the learning  of the hidden variable. The initial DDN´s transition probabilities are detailed in the Technical Appendix \ref{appendix:Technical Appendix}.

\begin{figure}[ht]
\centering
\includegraphics[width=\textwidth]{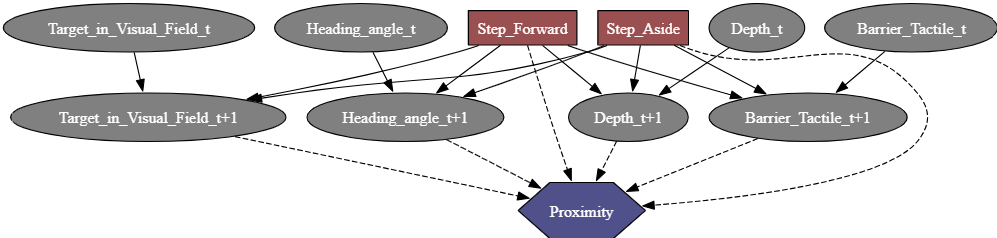}
\caption{Initial DDN structure.}
\label{fig:Initial_DDN}
\end{figure}

Regarding the DDN, the observation variables at times $t$ and $t+1$ are considered as chance nodes, the decision nodes are the action variables, and the utility node evaluates whether the agent is near the target and penalises when it hits the barrier. The utility node is formulated using the following function:
\begin{equation}\label{eq:utility_function}
   U(\textbf{x})=\left\{ \begin{array}{lc}  -2D -|HA-5| +10\cdot TVF -E_{SF} -E_{SA} & \text{if} \; BT=0 \\ \\ -10 -E_{SF} -E_{SA} & \text{if} \; BT=1   \end{array} \right.,
\end{equation}
where $\textbf{x}=(D,HA,BT,TVF,SF,SA)$ and $E_{SF}$, $E_{SA}$ are functions that return the energy wasted by the actions, depending on the value chosen by the agent, in this work we will use:
\begin{equation}
    \label{eq:energy_sf}
    E_{SF}(sf)=1+ 0.1\sqrt{sf}
\end{equation}
\begin{equation}
  \label{eq:energy_sa}
  E_{SA}(sa)=1+ 0.1\sqrt{|sa-5|},  
\end{equation}
where $sf$ and $sa$ refer to the decision values that the actions $Step\;Forward$ and $Step\;Aside$ take, respectively.\\

\begin{figure}[]
     \centering
     \begin{subfigure}[b]{0.49\textwidth}
         \centering
         \includegraphics[width=\textwidth]{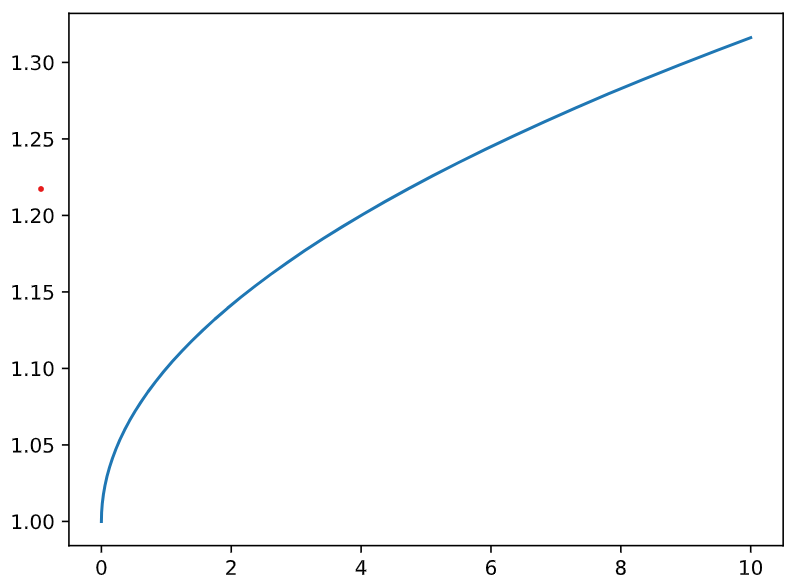}
         \caption{$Step\_Forward$ energy function (\ref{eq:energy_sf})}
         \label{fig:energy_sf}
     \end{subfigure}
     \hfill
     \begin{subfigure}[b]{0.49\textwidth}
         \centering
         \includegraphics[width=\textwidth]{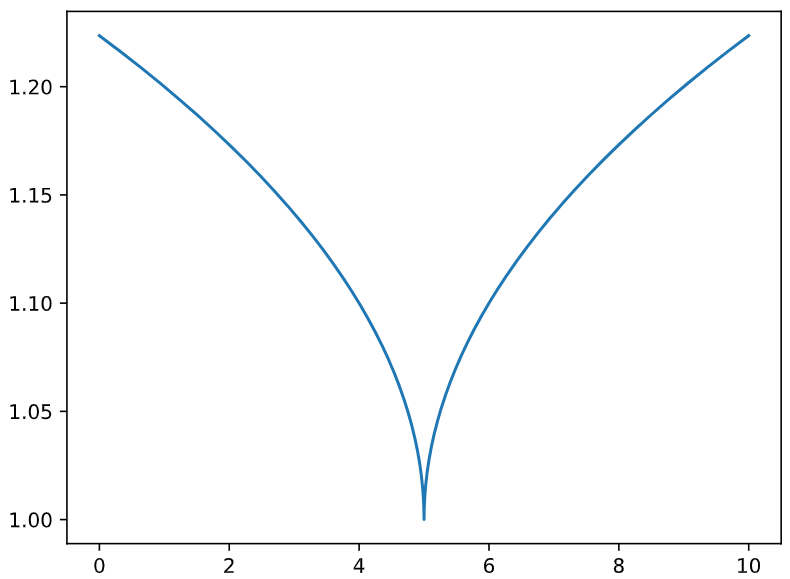}
         \caption{$Step\_Aside$ energy function (\ref{eq:energy_sa})}
         \label{fig:energy_sa}
     \end{subfigure}
        \caption{Energy functions chosen for the actions}
        \label{fig:energy_Functions}
\end{figure}
Regarding the relations between temporal variables, we assume that relations only exist between an observation variable at time $t$ and in the following time step. There are no relations between chance nodes that do not represent the same observation variable. Furthermore, decision nodes are related to all $t+1$ chance nodes.

\subsection{Problem modeled as a POMDP}
If we would model the interactions of the agent with the environment as a POMDP, we would have the following elements:
\begin{itemize}
    \item $S=\{(x,y) \in \mathbb{R}^2 \; | \; x \in [0,10] \;\& \; y \in [0,15]\}$. 
    \vspace{2mm}
    \item $A=\{Step \;Forward \; ,Step\;Aside\}$.
    \vspace{2mm}
    \item $T$ would be the degenerate distribution, which gives probability $1$ to the new state $s^\prime$ determined by the mechanistic function of the executed action $a$, according to the state $s$ of the agent.
    \vspace{2mm}
    \item $R$ would be the same as described in formula (\ref{eq:utility_function}).
    \vspace{2mm}
    \item $\Omega =\{Depth \; ,Heading\;Angle\;,Barrier\;Tactile,\;Target$ $\;in\;Visual\;Field\}$.
    \vspace{2mm}
    \item $O$ would also be the degenerate distribution, assigning probability $1$ to the value given by the function of the observation variable. In other words, the agent has accurate sensors that do not introduce errors into the measurement process. 
    \vspace{2mm}
    \item Since we are going to use a DDN that contemplates only the next instant in the future, the $\gamma$ factor is equal to $0$.
\end{itemize}


\section{Dealing with new Causal relations and Unexpected patterns}

This section briefly describes, for completeness reasons, the main aspects of Causal Discovery, as well as the basic ingredients of our Theory of Surprise. Yet, we refer the reader to \cite{Paper12024a,Paper22024b} for more detailed explanations. 

\subsection{Causal Discovery}
\label{sect:Causal Discovery}

Causal discovery in models of animal behavior consists, among other things, of learning the relation between the observed effects and the hypothetical causes that gave rise to them, for instance \cite{Corbacho1997,Corbacho2005}. The field of causal discovery in machine learning and statistics is a very active open research problem \cite{Glymour2019,Guyon2019,Lopez-Paz2015,Pearl2000,Peters_2011,Peters2015,Peters2017,Squires2023}. 
In this paper, for simplicity reasons, we assume that the agent is already equipped with several causal relations. Nevertheless, we will introduce the basics and ideas behind the algorithms that we would use in the case of complete learning.
We refer the reader to \cite{Paper12024a} for details on the algorithms that  learns this DDN from scratch by interacting with the environment.\\

In order to quantify and detect relationships between random variables, it is necessary to give a value to the relationship, that tells us how strong or true it is. To do this, we will use the causal coefficients.

\begin{definition}\label{def:Definición causalidad}Causal Coefficients\\
Given two random variables $X$ and $Y$, a causal coefficient $C(X,Y)$ is a scalar value such that, the larger $C(X,Y)$ is, the more truthful is the relation  $X\rightarrow Y$.\\

$C(X,Y)$ must satisfy the following properties:
\begin{enumerate}
    \item Antisymmetry: \[C(X,Y)=-C(Y,X)\]
    \item Discriminant: 
    \[C(X,Y)> \alpha > 0 \Rightarrow X\xrightarrow{} Y.\]
    The parameter $\alpha$ has the role of determining when the coefficient is large enough to consider the relationship $X\rightarrow Y$ to exist. If $|C(X,Y)|<\alpha$, then the relationship between $X$ and $Y$ cannot be determined.\\
    \item Invariant to linear transformations: 
    \[C(aX +b, cY + b)=C(X,Y) \;\; \forall a,b,c,d \in \mathbb{R}, \;\;  a,c\neq 0\]
\end{enumerate}
\end{definition}

From this definition, different coefficients can be designed, but the one we decided to use is the following:
\begin{equation}
\label{eq:Coef_entropy}
   C(X,Y)=\frac{H(X)}{log|X|} - \frac{H(Y)}{log|Y|},
\end{equation}
where $H(X)$ is the entropy of the random variable $X$ (\ref{eq:entropy}).
On one hand, the coefficient (\ref{eq:Coef_entropy}), arises from the work of Janzing et al. \cite{IGCI}, where it is shown that if $X \xrightarrow{} Y$ and there is independence between the mechanistic function and the distribution of $X$, then under some conditions one has that $H(X)\geq H(Y)$.
On the other hand, this coefficient takes into account that random variables with higher cardinal numbers may have higher entropy values than those with lower cardinal numbers. By normalizing by the logarithm of the cardinal, which is the maximum value of the entropy, we avoid cases in which the causality coefficient is high only because of the difference in cardinality between the variables and not because there is a causal relationship.\\

In the same manner as with causality between random variables, it  is desirable to have a causality coefficient that allows us to quantify the strength of the causal relationship between an action variable and a random variable. The definition \ref{def:Definición causalidad} will be used as a baseline for this purpose. 
\begin{definition}Causal action coefficient\\
Given an action variable $D$ and an observation variable $O$, a causal action coefficient $C_D(O)$ is a scalar value such that, the larger $C_D(O)$, the more truthful is the relation $D\rightarrow O$.\\

\noindent $C_D(O)$ must satisfy the following properties:
\begin{enumerate}
    \item Non-negativity: \[C_D(X,Y)\geq 0\]
    \item Discriminant: 
    \[C_D(O)> \alpha > 0 \Rightarrow D\xrightarrow{} O,\]
    where $\alpha$ will fulfill the same role it has in the definition of causal coefficient (\ref{def:Definición causalidad})\\
    \item Invariant to linear transformations: 
    \[C_D(aO +b)=C_D(O) \; \forall a,b\in \mathbb{R} , a\neq 0\]
\end{enumerate}
\end{definition}

A possible example of a causal action coefficient would be:
\begin{equation}
    C_D(O)=H(O)-H(O|D),
\end{equation}
where the entropy of $O$ (\ref{eq:entropy}) and the conditional entropy of $O$ given $D$ are used.\\

As for the algorithms, there will be two of them, one that builds the graph corresponding to the intratemporal relationships $G_0$, and another one that builds the intertemporal relationships involving each of the actions, i.e., it will build a graph for each action $G_{\xrightarrow{},Act}$.
The algorithm that will build $G_0$ will use a causal coefficient as defined above \ref{eq:Coef_entropy}. Yet, the algorithm that deals with the intertemporal relationships for each action, will need to calculate the temporal causal relationships between the observation variables and the causality between the action variables at time $t$ and the observation variables at time $t+1$. In this regard, the transfer entropy will be used.

\begin{definition}\label{def:Transfer_entropy} Transfer entropy between two stochastic processes\\
Given two stochastic processes $X_t$ and $Y_t \;  \forall t\in \mathbb{N}$, the transfer entropy is defined as:
\begin{equation}
\label{eq:Coef_transfer_entropy}
    T_{X \xrightarrow{} Y}=H(Y_{t}|Y_{t-1},\dots,Y_{t-l}) - H(Y_{t}|Y_{t-1},\dots,Y_{t-l},X_{t-1}\dots,X_{t-l}), \;\; \forall l\geq 1  
\end{equation}
This coefficient measures the direct information transfer between two stochastic processes $X$ and $Y$ and has a strong association with the Granger causality between $X$ and $Y$ \cite{Amblard_2013,Barnett_2009,Schlinder_2011}. 
\end{definition}

The $l$ chosen will depend on the time in the future that we intend to detect causality between actions at time $t$ and observation variables at time $t+l$, as between observation variables between $t$ and $t+l$.
The causality coefficient that we have decided to use is as follows:
\begin{equation}
\label{eq:Transfer_Entr_norm}
    NormT_{(Act,\textbf{O}) \xrightarrow{} Obs}=\frac{H(Obs_{t+1}|Obs_t) - H(Obs_{t+1}|Obs_{t},Act_t,\textbf{O})}{H(Obs_{t+1}|Obs_t)}
\end{equation}
This normalization factor results from working with percentages of entropy reduction relative to the entropy of the $Obs$ variable alone. The greater the reduction in entropy, the more information the variables contribute to the explanation of the $Obs$ variable. 
In this regard, the algorithm will work as a forward algorithm, that is the observation variables are added, apart from the action variable, according to the value of the partial entropy reduction; 
if the partial reduction is not enough, the process of adding variables is stopped.

\subsection{Surprise Divergence}
\subsubsection{Motivation for the surprise divergence measure}
Our theoretical framework for detecting latent variables will be based on the concept of the agent's surprise. 
In order to do this, we must first define what surprise is. 
We define it as the response to an unexpected event, so surprise comes from comparing what was believed or predicted to happen with what actually happens.\\

In mathematical terms, we could say that surprise comes from comparing the probability distribution generated by an agent of what will happen in the future, with the value that the variable ends up taking. Yet,  this can be generalized to consider the comparison of two distributions, the one that is thought to occur and the ``probability distribution'' that actually ends up occurring.
Therefore, it is clear that we need a divergence that compares the two distributions. There are several divergences available, such as the Kullback-Leibler divergence, but we choose to define our own from information theory concepts due to reasons later explained. 
We will use both entropy (expectation of the variable information) and information dispersion (standard deviation of the variable information) to standardize the information values of the events according to a distribution. In this way, we will quantify how far (in number of deviations) the events are from what we would consider predictable, i.e., we will weight how surprising these events would be. Then we will weight these values with the probabilities of the second distribution to which we want to compare. If high standardized values from one distribution have a high probability in the other distribution, then the distributions will be very different.
We notice that our approach will generate an asymmetric divergence, since it is very important to know which of the two distributions is the reference, since the surprise values will depend on it. 

\subsubsection{Mathematical formalism of surprise divergence measure}
Before introducing the surprise divergence measure, it is important to mention some concepts from Shannon´s information theory.
Let $X$ be a discrete random variable with $k$ possible outcomes $x_i \; i=1,\dots,k$, and a probability distribution over these outcomes $P(X)=(p_1,\dots,p_k)$, where $p_i=P(X=x_i)\; i=1,\dots,k$.
The entropy of $X$ based on the distribution $P(X)$ is 
\begin{equation} \label{eq:entropy}
    H(X)=E[-log(P(X))]=-\sum_{i=1}^k p_i log(p_i) 
\end{equation}

Understanding the concept of entropy as the first moment of the random variable $X$, we can extend it to higher order moments. Specifically, we define the information dispersion as the central moment of order 2 based on $P(X)$:

\begin{equation}\label{eq:information dispersion}
    \begin{split}
        V_I (X)=E[\left(-log(P(X))-H(X)\right)^2]=\\=\sum_{i}^k p_i (log(p_i))^2 - H(X)^2
    \end{split}
\end{equation}
Sometimes we will denote entropy and information dispersion by $H(P)$ and $V_I(P)$ respectly to emphasize that the distribution $P$ was used to calculate the moments.
Lastly, given a discrete random variable $X$ and two different distributions $P$, $Q$, the Kullback-Leibler divergence between $P$ and $Q$ is a measure of how different is a distribution in relation with the other one:
\begin{equation}
\label{eq:div_KL}
    D_{KL}(P||Q)=-\sum_{i=1}^k p_i log\left(\frac{q_i}{p_i}\right) =H(P,Q)-H(P),
\end{equation}
where $H(P,Q)$ denotes the cross entropy between $P$ and $Q$.\\

After introducing these ideas, we define the surprise divergence between two probability distributions $P$, $Q$ for the same random variable $X$ as:
\begin{equation}\label{eq:Surprise Divergence}
    \begin{split}
        D_S(Q||P)=E_Q\left[\frac{-log(P(X)) - H(X_P)}{\sqrt{V_I(X_P)}}\right]=\\= \frac{-\sum_{i=1}^{k} q_i\cdot log(p_i) -H(X_P)}{\sqrt{V_I(X_P)}}=\frac{H(Q,P) -H(P)}{\sqrt{V_I(P)}} \text{.}
    \end{split}
\end{equation}

Upon reformulation, one can express $D_s(Q||P)$ as the sum of the Kullback-Leibler  (\ref{eq:div_KL}) divergence between $Q$ and $P$, and the difference in entropies of $Q$ and $P$, all divided by the square root of the information dispersion of $P$:
\begin{equation}
\label{eq:reformulation}
\begin{split}
    D_S(Q||P)&=\frac{H(Q,P) -H(P)}{\sqrt{V_I(P)}} = \frac{-\sum_{i=1}^k q_i log(p_i) -H(P)}{\sqrt{V_I(P)}}=\\
    &=\frac{-\sum_{i=1}^k q_i \left(log\left(\frac{p_i}{q_i}\right) + log(q_i)\right) -H(P)}{\sqrt{V_I(P)}}=\\
    &=\frac{D_{KL}(Q||P) +H(Q)-H(P)}{\sqrt{V_I(P)}}.
\end{split}
\end{equation}

This reformulation (\ref{eq:reformulation}) enables us to observe that the surprise divergence does not solely depend on the Kullback–Leibler divergence to evaluate the discrepancy of $Q$ in relation to $P$:
\begin{enumerate}
    \item $H(Q)-H(P)$ measures the difference in the uncertainty given by both distributions.
    \item $\frac{1}{\sqrt{V_I(P)}}$ serves as a catalyst, the smaller the value of information dispersion of $P$, the more relevant will be the differences between the distributions, however small they may be.
\end{enumerate}

Finally, we will present and prove a number of propositions and theorems that are essential for understanding the rationale behind the decisions we will be making in subsequent sections.

\begin{prop}\label{prop:D_s_tend_0}
$D_s(Q||P)=0 \Leftrightarrow Q=P$.
\end{prop}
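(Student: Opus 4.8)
The plan is to establish the two implications separately, relying on the reformulation~(\ref{eq:reformulation}) of the surprise divergence together with Gibbs' inequality for the Kullback--Leibler divergence~(\ref{eq:div_KL}). Throughout I assume that $P$ is non-degenerate, so that $V_I(P)>0$ and the ratio in the definition~(\ref{eq:Surprise Divergence}) of $D_S(Q\|P)$ is well defined; the statement is to be read under this standing hypothesis (a uniform $P$ would make $V_I(P)=0$ and leave the divergence undefined).

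The direction $Q=P\Rightarrow D_S(Q\|P)=0$ is the easy one: I would substitute $Q=P$ into $D_S(Q\|P)=\bigl(H(Q,P)-H(P)\bigr)/\sqrt{V_I(P)}$ and use the identity $H(P,P)=H(P)$, so that the numerator vanishes.

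For the converse, assume $D_S(Q\|P)=0$. Since $\sqrt{V_I(P)}>0$, the numerator of~(\ref{eq:reformulation}) must vanish, i.e.
\[
D_{KL}(Q\|P)+H(Q)-H(P)=0 .
\]
Here I would invoke Gibbs' inequality, $D_{KL}(Q\|P)\ge 0$ with equality if and only if $Q=P$, and argue that the displayed identity can hold only when $D_{KL}(Q\|P)=0$; the equality case of Gibbs' inequality (itself a consequence of the strict convexity of $t\mapsto-\log t$) then gives $Q=P$ pointwise and closes the argument.

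I expect the last step to be the genuine obstacle. The numerator $H(Q,P)-H(P)$ is only affine in $Q$, so its vanishing at $Q=P$ does not by itself single out $Q$; one has to use the non-degeneracy of $P$ (and, implicitly, the normalization by $\sqrt{V_I(P)}$ that separates $D_S$ from a bare cross-entropy gap) to rule out the spurious distributions $Q$ that match $P$ only in cross entropy. Apart from this point, the easy implication and the well-definedness bookkeeping are routine.
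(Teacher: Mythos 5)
The easy direction of your argument ($Q=P\Rightarrow D_S(Q\|P)=0$ via $H(P,P)=H(P)$) is fine and is exactly what the paper does, and your remark that a uniform $P$ makes $V_I(P)=0$ and the divergence undefined is a correct and necessary caveat. The problem is the converse, and your own diagnosis is the right one: the vanishing of the affine functional $Q\mapsto H(Q,P)-H(P)$ is the genuine obstacle. Unfortunately the Gibbs-inequality argument you sketch does not overcome it. From $D_{KL}(Q\|P)+H(Q)-H(P)=0$ you cannot conclude $D_{KL}(Q\|P)=0$, because $H(Q)-H(P)$ can be strictly negative and exactly cancel a strictly positive Kullback--Leibler term; nothing in the normalization by $\sqrt{V_I(P)}$ prevents this, since that factor only rescales the numerator.

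In fact the converse implication is false whenever the support has at least three points. Take $P=(\tfrac12,\tfrac14,\tfrac14)$, which is non-uniform and has $V_I(P)>0$. Then $H(Q,P)=q_1\log 2+(q_2+q_3)\log 4=(2-q_1)\log 2$ depends on $Q$ only through $q_1$, and $H(P)=\tfrac32\log 2$, so every $Q=(\tfrac12,q_2,q_3)$ with $q_2+q_3=\tfrac12$ satisfies $H(Q,P)=H(P)$ and hence $D_S(Q\|P)=0$; e.g.\ $Q=(\tfrac12,\tfrac12,0)\neq P$ has $D_{KL}(Q\|P)=\tfrac12\log 2>0$ balanced by $H(Q)-H(P)=-\tfrac12\log 2$. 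Geometrically, $\{Q: H(Q,P)=H(P)\}$ is the intersection of the probability simplex with an affine hyperplane through $P$, which for $k\geq 3$ and non-uniform $P$ has positive dimension, so non-degeneracy of $P$ cannot rescue the claim. (Only for binary non-uniform $P$ is the level set the single point $P$.) You should also be aware that the paper's own proof rests on the unjustified equivalence $H(Q,P)=H(P)\Leftrightarrow Q=P$ and therefore has the same defect; your proposal is more honest in flagging the gap, but the gap is real and, as stated, the proposition does not hold in general.
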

\begin{proof}
    See proof in Appendix \ref{Appendix divergence proofs}, proposition \ref{proof prop1}
    
\end{proof}

\begin{thm}
\label{theorem:Expect_max_Divergencia_S}
    $D_{KL}(Q||P) \xrightarrow{} 0$ $\Leftrightarrow$ $D_S(Q||P)^2 \xrightarrow{} 0$.
\end{thm}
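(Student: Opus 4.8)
The plan is to derive everything from the reformulation (\ref{eq:reformulation}),
\[
D_S(Q\|P)=\frac{D_{KL}(Q\|P)+H(Q)-H(P)}{\sqrt{V_I(P)}},
\]
with $P$ held fixed (and $V_I(P)>0$, as is implicit for $D_S$ to be defined), so that $\sqrt{V_I(P)}$ is a fixed positive constant. Since $x\mapsto x^2$ and $x\mapsto x/\sqrt{V_I(P)}$ are continuous and vanish only at $0$, we have $D_S(Q\|P)^2\to 0$ iff $D_S(Q\|P)\to 0$ iff the numerator $D_{KL}(Q\|P)+H(Q)-H(P)\to 0$; and since $D_{KL}(Q\|P)+H(Q)=H(Q,P)$, this last condition is $H(Q,P)\to H(P)$. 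So the whole statement reduces to showing that, as $Q$ ranges over the probability distributions on the (finite) value set of $X$, $D_{KL}(Q\|P)\to 0$ iff $H(Q,P)\to H(P)$.

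For the implication $\Rightarrow$, I would start from $D_{KL}(Q\|P)\to 0$. By Pinsker's inequality this forces the total variation distance $\|Q-P\|_{\mathrm{TV}}\to 0$, hence $Q\to P$ coordinatewise; since Shannon entropy is continuous on the probability simplex, $H(Q)\to H(P)$, so the numerator $D_{KL}(Q\|P)+H(Q)-H(P)\to 0$ and therefore $D_S(Q\|P)^2\to 0$. Equivalently, one may note directly that $H(Q,P)=-\sum_i q_i\log p_i$ is a continuous (indeed linear) function of $Q$ equal to $H(P)$ at $Q=P$, so $Q\to P$ already gives $H(Q,P)\to H(P)$.

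For the converse $\Leftarrow$, I would argue by contradiction and compactness. Suppose $D_S(Q\|P)^2\to 0$ but $D_{KL}(Q\|P)\not\to 0$; then there are $\delta>0$ and a subsequence $Q_n$ with $D_{KL}(Q_n\|P)\ge\delta$, and by compactness of the simplex we may assume $Q_n\to Q^\star$. Both $Q\mapsto D_{KL}(Q\|P)$ and $Q\mapsto D_S(Q\|P)^2$ are continuous on the simplex (the only delicate terms, $q_i\log q_i$, extend continuously by $0$, and $V_I(P)$ is constant), so passing to the limit yields $D_{KL}(Q^\star\|P)\ge\delta>0$ and simultaneously $D_S(Q^\star\|P)^2=0$. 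By Proposition \ref{prop:D_s_tend_0}, $D_S(Q^\star\|P)^2=0$ gives $D_S(Q^\star\|P)=0$, hence $Q^\star=P$ and $D_{KL}(Q^\star\|P)=0$, contradicting $D_{KL}(Q^\star\|P)\ge\delta$. Hence $D_{KL}(Q\|P)\to 0$.

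I expect the converse to be the only real obstacle: $D_S(Q\|P)\to 0$ only asserts that one scalar summary of $Q$ (the cross entropy $H(Q,P)$) converges, which by itself is much weaker than $Q\to P$, so the argument has to import the rigidity of Proposition \ref{prop:D_s_tend_0} (that $D_S$ vanishes exactly at $Q=P$) together with compactness of the simplex to promote convergence of $D_S$ to convergence of $D_{KL}$. The forward direction, by contrast, is a routine consequence of Pinsker's inequality and continuity of entropy.
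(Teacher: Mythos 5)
Your proposal is correct and follows essentially the same route as the paper. The forward direction is the same argument in different packaging: the paper expands $D_S(Q\|P)^2$ into three terms $A+B+C$ and kills each one using $D_{KL}(Q\|P)\to 0\Rightarrow q_i-p_i\to 0\Rightarrow H(Q)-H(P)\to 0$, while you work with the numerator $D_{KL}(Q\|P)+H(Q)-H(P)$ directly and invoke Pinsker plus continuity of entropy on the simplex; the content is identical. Where you genuinely add something is the converse: the paper disposes of it in one line, asserting that Proposition \ref{prop:D_s_tend_0} ``implies'' $D_S(Q\|P)^2\to 0\Rightarrow q_i-p_i\to 0$, which as written is a non sequitur --- a function that vanishes only at $Q=P$ need not force $Q$ near $P$ when its values are merely small, unless one adds exactly the compactness-of-the-simplex and continuity argument you supply. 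So your version makes rigorous a step the paper leaves implicit. Be aware, though, that both proofs stand or fall with Proposition \ref{prop:D_s_tend_0} itself: since $H(Q,P)=-\sum_i q_i\log p_i$ is linear in $Q$, the level set $\{Q: H(Q,P)=H(P)\}$ is in general a positive-dimensional slice of the simplex rather than the single point $\{P\}$ (for $P=(1/2,1/4,1/4)$, every $Q$ with $q_1=1/2$ satisfies $H(Q,P)=H(P)$), so the claim that $D_S(Q\|P)=0$ only at $Q=P$ --- on which your contradiction, and equally the paper's converse, rely --- is the real weak point here, not your use of it.
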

\begin{proof}
    See proof in Appendix \ref{Appendix divergence proofs}, theorem \ref{proof theorem1}
\end{proof}

\begin{corollary}
It can be concluded from this theorem that the minimisation of $D_{KL}$ is equivalent to the minimisation of $D_S^2$, and therefore also equivalent to the maximisation of the likelihood. This guarantees that the Expectation-Maximisation algorithm can be used to minimise $D_S(Q||P)^2$.
\end{corollary}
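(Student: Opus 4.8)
The plan is to establish the chain of equivalences that the corollary asserts — maximum likelihood $\Leftrightarrow$ minimisation of $D_{KL}$ $\Leftrightarrow$ minimisation of $D_S^2$ — and then invoke the standard convergence guarantee of EM. The only non-trivial link is the last one, which is exactly what Theorem~\ref{theorem:Expect_max_Divergencia_S} together with Proposition~\ref{prop:D_s_tend_0} is designed to supply.

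First I would make the classical likelihood/$D_{KL}$ correspondence explicit in this setting. In the M-step of the hard-weighted EM used for the hidden-variable sub-graph parametric learning, one maximises over the CPT parameters $\theta$ the weighted complete-data log-likelihood $\ell(\theta)=\sum_i w_i\log P_\theta(x_i)$. Grouping the (weighted) completed configurations into their empirical distribution $\hat Q$ and using (\ref{eq:div_KL}), we get $\ell(\theta)=-N\,H(\hat Q,P_\theta)=-N\big(H(\hat Q)+D_{KL}(\hat Q||P_\theta)\big)$; since $H(\hat Q)$ does not depend on $\theta$, maximising $\ell$ over $\theta$ is exactly minimising $D_{KL}(\hat Q||P_\theta)$ over $\theta$.

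Then I would identify the $D_{KL}$ problem with the $D_S^2$ problem. By Proposition~\ref{prop:D_s_tend_0}, both $Q\mapsto D_{KL}(Q||P)$ and $Q\mapsto D_S(Q||P)^2$ vanish precisely at $Q=P$, so in the realisable case (the model family can represent $\hat Q$) they share the same unique minimiser, and the weighted MLE is simultaneously the minimiser of $D_S(\hat Q||P_\theta)^2$. Theorem~\ref{theorem:Expect_max_Divergencia_S} is what allows one to relax this: a parameter sequence $\theta_n$ satisfies $D_{KL}(\hat Q||P_{\theta_n})\to 0$ if and only if it satisfies $D_S(\hat Q||P_{\theta_n})^2\to 0$, so the minimising sequences of the two objectives coincide and any limit (or fixed point) that EM returns for one is one for the other. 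Chaining the three steps — maximise $\ell$, minimise $D_{KL}(\hat Q||P_\theta)$, minimise $D_S(\hat Q||P_\theta)^2$ — and appealing to EM's standard guarantee of monotone ascent in $\ell$ and convergence to a stationary point of $\ell$, one concludes that EM is a legitimate algorithm for minimising $D_S^2$, which is the claim. I would also record the non-degeneracy condition $V_I(P_\theta)\neq 0$ (equivalently, $P_\theta$ not uniform on its support), without which $D_S$ in (\ref{eq:reformulation}) is undefined, and restrict the argument to the region of parameter space where it holds.

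The step I expect to be the main obstacle is that Theorem~\ref{theorem:Expect_max_Divergencia_S} matches the two objectives only at the level of convergence to the minimum value $0$, not as a monotone transformation of one another: from the reformulation (\ref{eq:reformulation}), $V_I(P_\theta)\,D_S(\hat Q||P_\theta)^2=\big(D_{KL}(\hat Q||P_\theta)+H(\hat Q)-H(P_\theta)\big)^2$, and both $H(P_\theta)$ and $V_I(P_\theta)$ vary with $\theta$, so a single EM update that lowers $D_{KL}$ need not lower $D_S^2$, and in the non-realisable case the two minimisers may genuinely differ. Hence the equivalence must be phrased carefully — as equality of global optima and of limit/fixed points, rather than per-iteration monotonicity — and it is really the clean realisable reading of the corollary (both objectives attain $0$ at the true parameters, so the MLE optimises both) that is being proved. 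A secondary point to check is the slot assignment: EM naturally minimises $D_{KL}$ with the data $\hat Q$ in the first argument, so $P_\theta$ occupies the reference (predicted) slot of $D_S(Q||P)$, which is the order the surprise framework intends; this should be stated explicitly rather than left implicit.
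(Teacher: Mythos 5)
Your argument is correct and follows the route the paper intends: the corollary is stated without its own proof, being treated as an immediate consequence of Theorem~\ref{theorem:Expect_max_Divergencia_S} via the standard chain ``maximise likelihood $\Leftrightarrow$ minimise $D_{KL}(\hat Q\|P_\theta)$'' plus the shared zero set of the two divergences, which is exactly what you spell out. Your additional caveats --- that the theorem only identifies minimising sequences rather than making $D_S^2$ a monotone transform of $D_{KL}$ (so EM's per-iteration monotonicity does not transfer), and that $V_I(P_\theta)\neq 0$ is needed for $D_S$ to be defined --- are legitimate sharpenings of the corollary's informal phrasing rather than deviations from it.
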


\begin{thm}
    \label{theorem:Div_normal}
     Let $\{X_1,\dots, X_n\}$ be a sample of independent random variables, each one with the same probability distribution $P=[p_1,\dots,p_k]$, then
     \[\sqrt{n} \cdot D_S(\hat{P}||P) \xrightarrow{d} N(0,1),\] where $\hat{P}=[\hat{p}_1,\dots,\hat{p}_n]$ is de maximum likelihood estimator of $P$ from $\{X_1,\dots, X_n\}$.  
\end{thm}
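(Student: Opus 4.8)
The plan is to reduce the statement to a direct application of the classical Central Limit Theorem by recognizing that $\sqrt{n}\,D_S(\hat P||P)$ is, up to the fixed factor $1/\sqrt{V_I(P)}$, exactly the standardized sample mean of the ``information content'' random variable $Y := -\log P(X)$.

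First I would recall that for a categorical variable over the $k$ outcomes $x_1,\dots,x_k$ (without loss of generality restricting to the support, so $p_i>0$), the maximum likelihood estimator $\hat P=[\hat p_1,\dots,\hat p_k]$ is the vector of empirical frequencies $\hat p_i=\frac1n\sum_{j=1}^n\mathbf{1}[X_j=x_i]$. Using the closed form $D_S(Q||P)=\bigl(H(Q,P)-H(P)\bigr)/\sqrt{V_I(P)}$ from (\ref{eq:Surprise Divergence}) with $Q=\hat P$, together with the telescoping identity $H(\hat P,P)-H(P)=-\sum_i(\hat p_i-p_i)\log p_i=\sum_i(\hat p_i-p_i)\bigl(-\log p_i\bigr)$, and then substituting the empirical frequencies, I obtain
\[
D_S(\hat P||P)=\frac{1}{\sqrt{V_I(P)}}\cdot\frac1n\sum_{j=1}^{n}\Bigl(-\log P(X_j)-H(P)\Bigr)=\frac{\bar Y_n-H(P)}{\sqrt{V_I(P)}},
\]
where $Y_j:=-\log P(X_j)$ and $\bar Y_n$ denotes their sample mean.

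Next I would identify the moments of $Y_j$. Since the $X_j$ are i.i.d.\ with law $P$, the $Y_j$ are i.i.d., and the definitions (\ref{eq:entropy}) and (\ref{eq:information dispersion}) give $E[Y_j]=H(P)$ and $\operatorname{Var}(Y_j)=E\bigl[(Y_j-H(P))^2\bigr]=\sum_i p_i(\log p_i)^2-H(P)^2=V_I(P)$, which is finite because $k$ is finite. Assuming the nondegenerate case $V_I(P)>0$ (note $V_I(P)=0$ forces $P$ to be uniform on its support, where $D_S(\hat P||P)\equiv 0$ and the claim would need a different normalization), the classical CLT yields
\[
\sqrt{n}\,\frac{\bar Y_n-E[Y_1]}{\sqrt{\operatorname{Var}(Y_1)}}=\sqrt{n}\,\frac{\bar Y_n-H(P)}{\sqrt{V_I(P)}}=\sqrt{n}\,D_S(\hat P||P)\xrightarrow{d}N(0,1),
\]
which is exactly the assertion.

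I do not expect a serious obstacle: the only step requiring care is the algebraic reduction in the first paragraph — verifying that the cross-entropy difference telescopes into the centered information content $Y_j-H(P)$ and that $\hat P$ is precisely the empirical distribution — together with flagging the degenerate case $V_I(P)=0$. Once $\sqrt{n}\,D_S(\hat P||P)$ has been rewritten as a standardized i.i.d.\ sample mean, the conclusion is immediate from the Lindeberg–Lévy CLT.
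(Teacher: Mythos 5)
Your proof is correct and follows essentially the same route as the paper: both identify $\sqrt{n}\,D_S(\hat P\|P)$ as the standardized sample mean of the i.i.d.\ information contents $-\log P(X_j)$, whose mean and variance are $H(P)$ and $V_I(P)$, and then invoke the classical CLT. Your explicit remark on the degenerate case $V_I(P)=0$ is a small but worthwhile addition that the paper omits.
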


\begin{proof}
See proof in Appendix \ref{Appendix divergence proofs}, theorem \ref{proof theorem2}

\end{proof}

\begin{prop}\label{prop:prob_tend_info_disp}
    Let $\{X_1,\dots, X_n\}$ be a sample of independent random variables, each one with the same probability distribution $P=[p_1,\dots,p_k]$ and 
    \[\hat{V}_I(P)=\frac{1}{n-1}\sum_{i=1}^n (-log(P(X=X_i))-H(\hat{P},P)^2),\] then:
    \[\hat{V}_I(P) \xrightarrow{P} V_I(P).\]
\end{prop}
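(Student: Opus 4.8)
The plan is to recognize $\hat V_I(P)$ as the ordinary (Bessel-corrected) sample variance of a suitably chosen i.i.d. sequence, and then to invoke the classical consistency of the sample variance.

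First I would introduce the auxiliary variables $Y_i := -\log P(X = X_i)$ for $i=1,\dots,n$. Since the $X_i$ are i.i.d. with law $P$ on the finitely many outcomes, the $Y_i$ are i.i.d. as well, each taking the value $-\log p_j$ with probability $p_j$ (restricting, without loss of generality, to the outcomes with $p_j>0$). In particular each $Y_i$ is bounded, hence has finite moments of every order, and directly from (\ref{eq:entropy}) and (\ref{eq:information dispersion}) one gets $E[Y_1] = H(P)$ and $\mathrm{Var}(Y_1) = \sum_{j} p_j (\log p_j)^2 - H(P)^2 = V_I(P)$.

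The key observation, and essentially the only step that is not textbook, is that the cross entropy $H(\hat P, P)$ appearing in the definition of $\hat V_I(P)$ coincides with the sample mean $\bar Y_n := \tfrac1n\sum_{i=1}^n Y_i$. Indeed, writing $\hat p_j = \tfrac1n\sum_{i=1}^n \mathbf{1}[X_i = x_j]$ for the empirical frequencies,
\[
H(\hat P, P) = -\sum_{j} \hat p_j \log p_j = \frac1n\sum_{i=1}^n \bigl(-\log P(X=X_i)\bigr) = \bar Y_n .
\]
Consequently $\hat V_I(P) = \frac{1}{n-1}\sum_{i=1}^n (Y_i - \bar Y_n)^2$ is exactly the unbiased sample variance of the i.i.d. sample $Y_1,\dots,Y_n$.

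From here I would conclude with the standard argument: expand $\hat V_I(P) = \frac{n}{n-1}\bigl(\frac1n\sum_{i=1}^n Y_i^2 - \bar Y_n^2\bigr)$; by the weak law of large numbers $\frac1n\sum_{i=1}^n Y_i^2 \xrightarrow{P} E[Y_1^2]$ and $\bar Y_n \xrightarrow{P} H(P)$; the continuous mapping theorem gives $\bar Y_n^2 \xrightarrow{P} H(P)^2$, and since $\frac{n}{n-1}\to 1$, Slutsky's theorem yields $\hat V_I(P) \xrightarrow{P} E[Y_1^2] - H(P)^2 = V_I(P)$. The only hypothesis needing attention is the finiteness of $E[Y_1^2]$, which is immediate because the outcome space is finite and the $p_j$ in the support are strictly positive, so $Y_1$ is bounded. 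Thus there is no real obstacle here: once $H(\hat P, P)$ is identified with $\bar Y_n$, the proposition is the classical statement that the sample variance is a consistent estimator of the population variance.
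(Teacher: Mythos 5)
Your proposal is correct and follows essentially the same route as the paper's proof: both identify $H(\hat P,P)$ with the sample mean of $Y_i=-\log P(X=X_i)$, rewrite $\hat V_I(P)$ as $\tfrac{n}{n-1}\bigl(\tfrac1n\sum Y_i^2-\bar Y_n^2\bigr)$ (the paper does this after centering at $H(P)$, which is an immaterial difference), and conclude by the law of large numbers plus the continuous mapping theorem. Your explicit remark that boundedness of $Y_1$ guarantees the needed moments is a small point of rigor the paper leaves implicit.
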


\begin{proof}
    See proof in Appendix \ref{Appendix divergence proofs}, theorem \ref{proof prop2}
\end{proof}

\begin{thm}
    Given a sample $\textbf{X}$ as in theorem \ref{theorem:Div_normal}, then
    \[\sqrt{n} \cdot \frac{H(\hat{P},P) - H(P)}{\sqrt{\hat{V}_I(P)}} \xrightarrow{d} N(0,1).\]
\end{thm}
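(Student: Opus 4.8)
The plan is to combine the asymptotic normality of the "normalized" surprise divergence established in Theorem \ref{theorem:Div_normal} with the consistency of the empirical information dispersion from Proposition \ref{prop:prob_tend_info_disp}, and then close the argument with Slutsky's theorem. Concretely, I would write
\[
\sqrt{n}\cdot\frac{H(\hat{P},P)-H(P)}{\sqrt{\hat{V}_I(P)}}
=\left(\sqrt{n}\cdot\frac{H(\hat{P},P)-H(P)}{\sqrt{V_I(P)}}\right)\cdot\sqrt{\frac{V_I(P)}{\hat{V}_I(P)}}
=\sqrt{n}\cdot D_S(\hat{P}\,\|\,P)\cdot\sqrt{\frac{V_I(P)}{\hat{V}_I(P)}},
\]
using the identity $D_S(\hat{P}\|P)=\big(H(\hat{P},P)-H(P)\big)/\sqrt{V_I(P)}$ from the definition \eqref{eq:Surprise Divergence}. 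This is a valid factorization provided $V_I(P)>0$, i.e. provided $P$ is not degenerate on a single outcome; I would state this nondegeneracy as a standing hypothesis (the same one implicitly needed in Theorem \ref{theorem:Div_normal}).

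Next I would identify the two factors. By Theorem \ref{theorem:Div_normal}, the first factor $\sqrt{n}\cdot D_S(\hat{P}\|P)$ converges in distribution to $N(0,1)$. For the second factor, Proposition \ref{prop:prob_tend_info_disp} gives $\hat{V}_I(P)\xrightarrow{P}V_I(P)$; since $V_I(P)>0$ and the map $v\mapsto\sqrt{V_I(P)/v}$ is continuous at $v=V_I(P)$, the continuous mapping theorem yields $\sqrt{V_I(P)/\hat{V}_I(P)}\xrightarrow{P}1$. I would remark here on a small bookkeeping point: the estimator in Proposition \ref{prop:prob_tend_info_disp} is written with $H(\hat P,P)$ in place of the population $H(P)$ inside the squared deviation, but since $H(\hat P,P)\xrightarrow{P}H(P)$ (continuity of the cross-entropy in its first argument together with $\hat P\xrightarrow{P}P$ by the LLN), this discrepancy is asymptotically negligible and does not affect the probability limit.

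Finally I would invoke Slutsky's theorem: a sequence converging in distribution to $N(0,1)$, multiplied by a sequence converging in probability to the constant $1$, converges in distribution to $N(0,1)$. Hence
\[
\sqrt{n}\cdot\frac{H(\hat{P},P)-H(P)}{\sqrt{\hat{V}_I(P)}}\xrightarrow{d}N(0,1),
\]
which is the claim.

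I expect the main obstacle to be not the Slutsky-type assembly, which is routine, but the careful justification that replacing $V_I(P)$ by the plug-in estimator $\hat V_I(P)$ — including the cross-entropy surrogate appearing inside it — is asymptotically harmless; this rests on Proposition \ref{prop:prob_tend_info_disp} and on the continuity/LLN remarks above, and it is worth spelling out that all of these limits hold jointly so that Slutsky's theorem applies to the product. A secondary point to handle cleanly is the nondegeneracy assumption $V_I(P)>0$: if $P$ is a point mass the statement is vacuous or ill-posed since both numerator and denominator vanish, so this case should be explicitly excluded, consistent with the hypotheses of the earlier results.
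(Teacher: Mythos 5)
Your proposal is correct and follows essentially the same route as the paper: factor out $\sqrt{V_I(P)/\hat{V}_I(P)}$, apply Theorem \ref{theorem:Div_normal} to the first factor and Proposition \ref{prop:prob_tend_info_disp} plus the continuous mapping theorem to the second, then conclude by Slutsky. Your version is in fact slightly more careful than the paper's, which writes the correction factor without the square root and leaves the nondegeneracy condition $V_I(P)>0$ implicit.
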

\begin{proof}
    See proof in Appendix \ref{Appendix divergence proofs}, theorem \ref{proof theorem3}
\end{proof}

\subsection{Surprise coefficient}
From the surprise divergence between two distributions \ref{eq:Surprise Divergence},  we define the surprise of an outcome of $X$ relative to the distribution $P(X)$, as the following coefficient:
\begin{equation}\label{eq:surprise_coefficient}
    C_S( x_i|| P(X))=\frac{|-log(p_i)-H(X)|}{\sqrt{V_I(X)}}
\end{equation}

This is a particularization of \ref{eq:Surprise Divergence}, where $Q$ would be the degenerate distribution of $X$ with $qi = 1$ and $q_j=0 \; \forall j \neq i$. Additionally, the absolute value of the coefficient is also taken. A paper in preparation \cite{Paper22024b} will provide a detailed comparison with other theories of surprise \cite{Baldi2002,BALDI2010,Friston2010,Friston2017,Itti2005,Ku2015,MODIRSHANECHI2022,SCHMIDHUBER2009,Schwartenbeck2013}. 

\section{Learning to Detour Overall Process}
In our general learning theory  we try to assume the least possible amount of a priori knowledge and let the agent, by interacting with its environment, learn all the necessary causal structure as it is needed on the fly. The overall learning to detour process goes through the following main phases:

\begin{enumerate}
    \item \textbf{First phase: Causal Discovery}
        \begin{enumerate}
            \item Random action execution interacting with the environment to generate samples
            \item Learning of Dynamic Decision Network (DDN)
                \begin{enumerate}
                    \item Execute the algorithm for intratemporal relationships outlined in section (4.1), and detailed in  \cite{Paper12024a}.
                    \item For each action, execute the algorithm for intertemporal relationships outlined in section (4.1), and detailed in \cite{Paper12024a}.
                \end{enumerate}
            \item Learn the parameters of DDN CPTs, using MLE or MAP methods.
        \end{enumerate}
    \vspace{1mm}
    \item \textbf{Second phase: Learning Latent Variables:} 
        \begin{enumerate}
            \item MEU interaction with the environment to generate samples. In each iteration:
            \begin{enumerate}
                \item Collect evidence over observation variables
                \item Action selection by Maximum Utility Expectation (equation \ref{eq:MEU}) using an optimizing algorithm (for example Shafer Shenoy for LIMIDs)

                \item Compute the Expected Utility and the probability distribution associated with the Utility values $U(x)$ (equation \ref{eq:utility_function}) before executing the selected action. 
                \item Execute the action, collect the new evidence in the observation variables, and calculate the Utility value
                \item Compute the surprise in the Observation variables and in the Utility with coefficients 
                (\ref{eq:surprise_coefficient}),
                (\ref{eq:Utility_surprise}), respectively                \item Calculate the probability of Influence of the latent variable (equation \ref{eq:probability_influence_HV}) and check if the agent detected a latent variable with the test of hypothesis (test \ref{eq:hypothesis test}). 
                \item In case of rejection, store the variables related to the new latent variable, to later add the new hidden variable and its (causal) relations.
            \end{enumerate}
            \item At the end of the epoch of interaction with the environment:
            \begin{enumerate}
                \item Add the hidden variables, with their respective relationships to the observational variables, to the agent´s DDN.
                \item Execute the algorithm weighted EM (algorithm \ref{alg:weighted EM}) for learning the new CPTs associated with the new hidden variables until convergence.
            \end{enumerate}
        \item Repeat (a) and (b), skipping step (b.i), until convergence of new CPTs added.
            
        \end{enumerate}
\end{enumerate}	

\subsection{Latent variable detection and new Hidden variable Structure learning}
As we have mentioned, we propose latent variable detection when a significant surprise in the utility function takes place.  It follows with hidden variable sub-graph structure learning to adapt the structure of the DDN graph to the new detected hidden variable and its topological structure with respect to the already existing chance variables in the DDN.  The selection of candidates for the hidden variable´s children and parents is also based on the relevance of each particular chance variable on the coefficient of surprise for the overall probabilistic graph joint distribution. For this paper we assume a “XM” topological structure that we will define later. 
The possible existence of a latent variable is detected by a sudden significant surprise in the utility function (the symptom). Yet the cause has to be inferred. In the Learning to Detour case, the latent cause corresponds to the width of the agent, which happens to be wider than the paling fence openings, thus making all the gaps non passable. 

\subsubsection{Utility Surprise}
This section describes latent variable detection by Surprise in the Utility function. That is, after observing the actual Utility, the agent will compute the possible surprise in the Utility. In order to do so, we extrapolate the surprise coefficient defined in (\ref{eq:surprise_coefficient}) to define the Utility surprise coefficient, where we also take into account whether a specific surprise generates an utility larger or smaller than expected.\\

First of all, recall that the agent's way of acting is summarized by observing the values of the observation variables $\textbf{x}_{t-1}$ at time $t-1$ and computing the probabilities on the observation variables at time $t$ from these values, as well as the possible values of the action space $Act_{t-1}$, denoted by $\probP(\textbf{x}_t |\textbf{x}_{t-1}, Act_{t-1})$.  
This enables the agent to calculate all possible values of $U(\textbf{x}_t)$ and then choose the decision $\textbf{a}^*$ that brings the maximum expected utility,
\begin{equation}
    \label{eq:MEU}
    MEU(t)= \max_a E_{P(\textbf{x}_t | \textbf{x}_{t-1} , a)}[U(x_t)]= E_{P(\textbf{x}_t | \textbf{x}_{t-1} , \textbf{a}^*)}[U(x_t)].
\end{equation}

The way the agent will detect the latent variables will be by comparing the utility value it receives at time $t$ with the probability distribution $P(\textbf{x}_t |\textbf{x}_{t-1}, Act_{t-1})$ it computed at time $t-1$. Two critical aspects must be taken into account in this comparison:
\begin{enumerate}
    \item On the one hand, it is important to notice whether the obtained value $U(x_t)$ is greater or smaller than the expected, $MEU(t)$.
    \item On the other hand, we are interested in how surprising the value is compared to the distribution $\probP(\textbf{x}_t | \textbf{x}_{t-1}, Act_{t-1})$.
\end{enumerate}

As a result, there are four possible scenarios in which to consider the probability that a latent variable was involved. 
\begin{enumerate}
    \item large surprise and negative utility
    \item large surprise and positive utility
    \item low surprise and negative utility
    \item low surprise and positive utility
\end{enumerate}

Case $1$ means that the utility value obtained had a very low associated probability, and if, in addition, the utility value is lower than the mean, then it means that there was some unobserved variable that influenced the agent's action, causing the utility not to be maximized as expected, and making it convenient for the agent to learn to predict when something similar might occur so that the decision is correct. The reason for studying whether the expected value is lower or higher is that, in the end, the agent's choice of decisions is conditioned to look for the highest expected value. So,  if it turns out that the value obtained is lower, then the agent could have chosen another decision that might have yielded a higher utility. It is also irrelevant whether the dispersion of the distribution was high or low, since the surprise coefficient indicates whether the utility obtained was a surprise event or not.\\

As for the case $2$, if the utility obtained is higher than expected, then we would have the opposite case, i.e. it has been discovered that there is a variable that influences and helps to obtain higher utility values, so it is convenient to learn to model when this may occur. However, in this paper, we will focus on detecting the latent variables that have a negative influence on the utility, and therefore the probability of influence will be close to $0$, in the case that the utility obtained is higher than expected.\\

If the surprise coefficient is close to $0$, as for the cases $3$ and $4$, then it means that the probability that there is a variable that the agent does not observe, and that changes its interaction with the environment will be low, since the agent is able to correctly predict the values of the observation variables that it will get after executing the actions.
Taking all this into account, the coefficient that will be responsible for making this comparison will be an extrapolation of the one described in (\ref{eq:surprise_coefficient}), which will also include whether the utility obtained is higher or lower than the one expected.
\begin{equation}\label{eq:Utility_surprise}
C_U(\textbf{P}||\textbf{x}_t)=sign\left(U(\textbf{x}_t)-MEU(t)\right)\cdot C_S(\textbf{P}||\textbf{x}_t),
\end{equation}

where $U(\textbf{x}_t)$ corresponds to the value of the utility function for the observation $\textbf{x}_t$ and $\textbf{P}$ corresponds to the probability distribution $P(\textbf{x}_t |\textbf{x}_{t-1}, Act_{t-1})$.
This surprise coefficient can take  large values when there is poor estimation probabilities or when a latent variable is influencing the current situation. In this paper, for simplicity reasons we assume the second case. 

\subsubsection{Probability of Influence of the Hidden Variable}
\label{section:IPHV}
Next, the agent must compute the probability of influence of the latent variable on the Utility. That is, the probability that a change in the latent variable is the cause of the observed surprise in the utility function, as well as, influencing the observations at $t+1$.
Internally, the latent variable is represented by a hidden variable in the DDN. It is a hidden variable since it never gets any associated observation.   In this paper, for simplicity reasons, we assume that the hidden variable is binary. We use the value 0 to represent that the hidden variable does not have any influence on the DDN, and, thus, the original model correctly explains the Observation variables distributions and also allows to properly predict the expected utility. On the other hand, the value 1 represents that the hidden variable does influence the DDN, and a modification on the distributions (transitions) at $t+1$  is needed, since they were not correct, and also a negative surprise on the expected utility took place.\\

The probability of influence of the latent variable will be a function dependent on the value of the coefficient of the utility surprise. The range of possible values that $C_U$ can take is potentially $(-\infty ,\infty)$, and the goal is to convert it to the range $[0,1]$.
Therefore, assuming we want to calculate $\probP(HV=0)$, the lower the coefficient, the greater the negative surprise, that is, the greater the evidence that $HV=1$, so $\probP(HV=0)$ must tend to $0$ . On the other hand, the higher it is, the greater the evidence that $HV=0$, thus $\probP(HV=0)$ must tend to $1$.
However, when $C_U$ is close to $0$, this will indicate that there has not been enough surprise at instant t, so we do not have enough evidence to say that $HV=1$, but neither is it correct to say that $HV=0$, since it could happen that the influence of the latent variable is practically null, therefore, when $C_U=0$ we will have that $\probP(HV=0)= \frac{1}{2}$.\\

Summing up, the range of possible values that $C_U$ can take is potentially $(-\infty ,\infty)$, and the goal is to convert it to the range $[0,1]$ as long as $\probP(HV=0|C_U=0)= \frac{1}{2}$, $\probP(HV=0|C_U\xrightarrow{}\infty)=1$ and $\probP(HV=0|C_U\xrightarrow{}-\infty)=0$.
This can be achieved by different functions (such as the logistic, hyperbolic tangent, normal cumulative distribution function, etc). However, we will choose a function that is consistent with the underlying statistical theory behind the coefficient.
The absolute value $|C_U|$, based on the theorem \ref{theorem:Div_normal}, would statistically follow a $\chi$ distribution with one degree of freedom, since its the absolute value of something that tend to a normal distribution. Therefore, we will use the cumulative distribution function of a $\chi$ distribution as a baseline. In addition, we must consider the sign of the coefficient and guarantee that $C_U=0$ corresponds to a probability of $\frac{1}{2}$. To this end, we have developed the following function:
\begin{equation}\label{eq:probability_influence_HV}
    P(HV=0 |C_U) =\left\{ \begin{array}{lc} \frac{1}{2}-\frac{1}{2}P(\chi \leq |C_U|) & \text{if} \; C_U < 0 \\ \\ \frac{1}{2}+\frac{1}{2}P(\chi \leq C_U) & \text{if} \; C_U\geq 0   \end{array} \right.,
\end{equation}
which behaves as we have previously described. That is, when  $C_U << 0$, the probability of no influence will tend to $0$; meanwhile if $C_U >> 0$, the probability will tend to $1$.

\begin{figure}[ht!]
    \centering
    \includegraphics[width=0.5\textwidth]{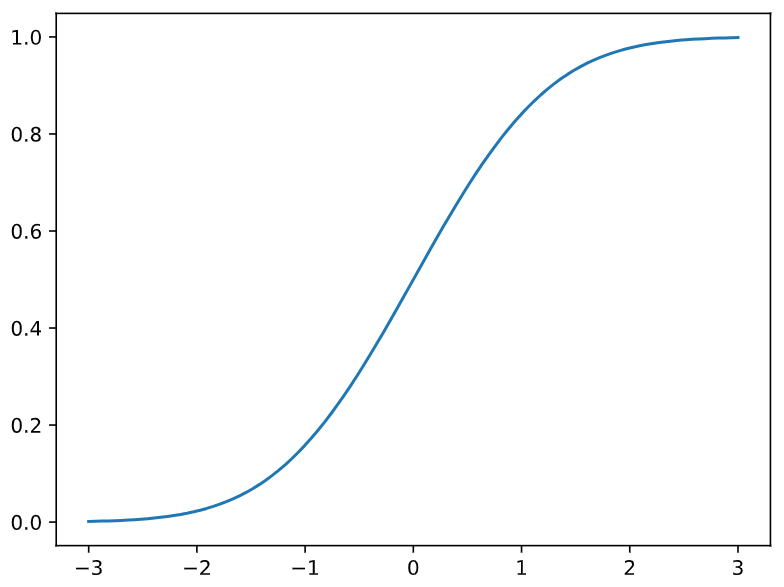}
    \caption{Plot of the function( \ref{eq:probability_influence_HV})}
    \label{fig:plot_PIHV}
\end{figure}

\subsubsection{Selection of related variables: Surprise in Observations}
Once we have determined the existence of a latent variable by the previous calculations, the next step is to introduce the hidden variable in the DDN´s graph. In order to do so, we must first select the subset of observation variables that are related to the hidden variable. To do so, we must determine the degree of relevance of each variable with respect to the overall surprise in the conjoint probability distribution.
So, we need to compute the possible surprise for each observation variable. That is, the surprise coefficient produced between the predicted distribution for an Observation variable at time $t+1$ and the real observed value at  $t+1$.
The Observation variables that have just undergone a large enough surprise will be selected as the children and parents of the hidden variable.\\

In order to identify which variables exhibit sufficient surprise during an epoch of experiences, we have developed a statistical hypothesis test based on the properties of the surprise divergence (\ref{eq:Surprise Divergence}).
Briefly explained, this test allows to determine whether a distribution fits a sample, or it does not, by studying the surprises that would be experienced if it were true. Variables that reject this hypothesis are identified as parents and children of the hidden variable. If no variable rejects this hypothesis, it can be concluded that there is no need to include a hidden variable, since there would be no influence of a latent variable to be considered.\\

Therefore, the test is defined by:
\[\;\;\;\;\;H_0: \text{The variable } Obs \text{ has not experienced a surprise in time } t \]
\[H_1: \text{The variable } Obs \text{ has experienced a surprise in time } t,\]

using the surprise divergence (\ref{eq:Surprise Divergence}), the test is as follows:

\begin{equation}
    \label{eq:hypothesis test}
    \begin{split}
        H_0: &D_S(Obs_t || P(Obs_{t}|Obs_{t-1},Act_{t-1}))=0\\
        H_1: &D_S(Obs_t || P(Obs_{t}|Obs_{t-1},Act_{t-1}))\neq 0,    
    \end{split}
\end{equation}
    
and considering theorem \ref{theorem:Div_normal}, the statistic and the critic region of the test are: 
\begin{equation}
C_S=\left|\frac{-log(Obs_{t})-H(P(Obs_{t}|Obs_{t-1},Act_{t-1}))}{V_I(P(Obs_{t}|Obs_{t-1},Act_{t-1}))}\right|    
\end{equation}
\begin{equation}
R=\left\{|C_S|\geq z_{1-\frac{\alpha}{2}}\right\},    
\end{equation}
which leads us to the formula for the p-value of the hypothesis test
\begin{equation}
    p\_value=2\cdot (1-P(Z\leq C_S)),
\end{equation}
with $Z\sim N(0,1)$.\\

Returning to the Learning to Detour case (Figure \ref{fig:Normalized surprises coefficient}), when the agent first bumps into the barrier, an unexpected effect on the $Barrier \; Tactile$ Observation variable occurs, since it is not predicted by any existing transition. Similarly, an unexpected effect on the $Depth$ Observation variable occurs after the activation of the $Step \; Forward$ Action variable, since the associated transition predicts a reduction on the $Depth$ value, yet the $Depth$ value remains unaltered  since no forward movement can take place, due to the blocking effect by the barrier.
For the purposes of visualization, the surprises were normalized by dividing them by the sum of the surprises present in the observed variables. This allowed for the accurate representation (Figure \ref{fig:Normalized surprises coefficient}) of the fact that the variables $Barrier\;Tactile$ and $Depth$ experienced the greatest degree of surprise, and therefore represent the variables that should be selected.
\begin{figure}[ht!]
\centering
\includegraphics[width=0.75\textwidth]{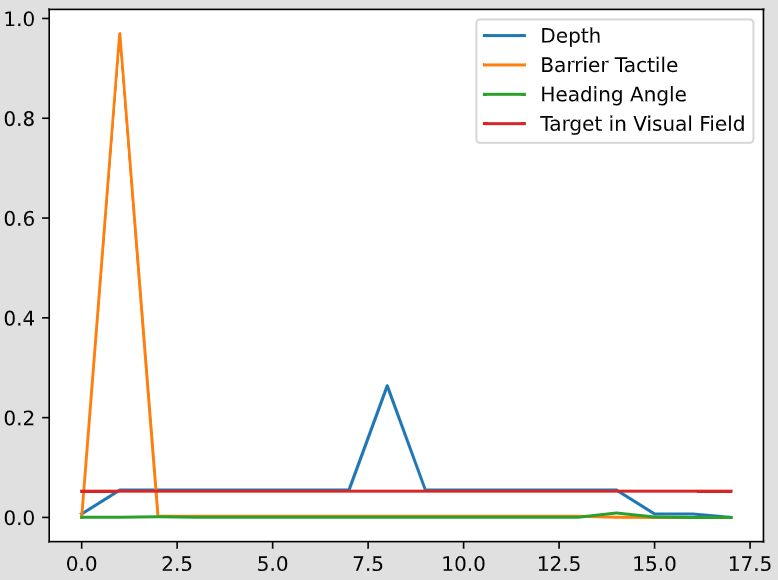}
\caption{Normalized surprise coefficients over time.}
\label{fig:Normalized surprises coefficient}
\end{figure}

\begin{figure}[ht!]
     \centering
     \begin{subfigure}[b]{0.49\textwidth}
         \centering
         \includegraphics[scale=0.3]{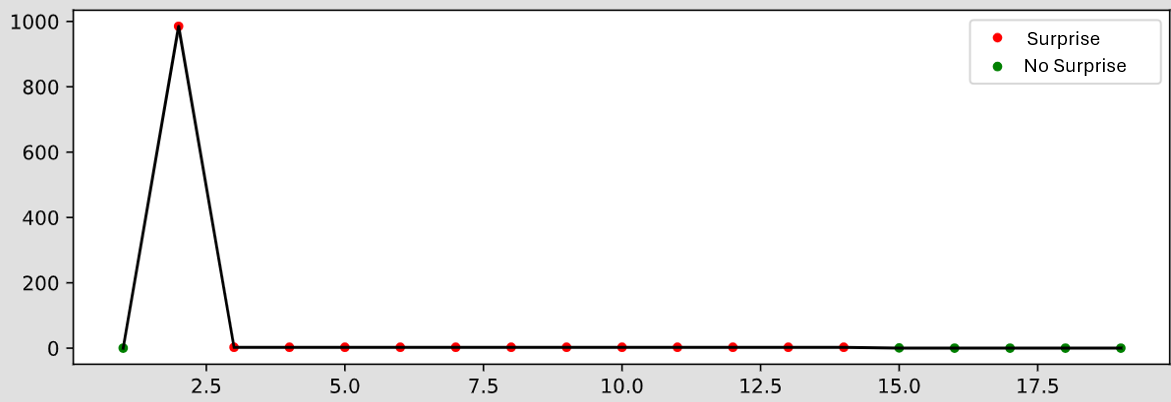}
         \caption{Surprises $Barrier\;Tactile$}
         \label{fig:Barrier_Tact_sorp_epoca_1}
     \end{subfigure}
     \hfill
     \bigskip
     \begin{subfigure}[b]{0.49\textwidth}
         \centering
         \includegraphics[scale=0.3]{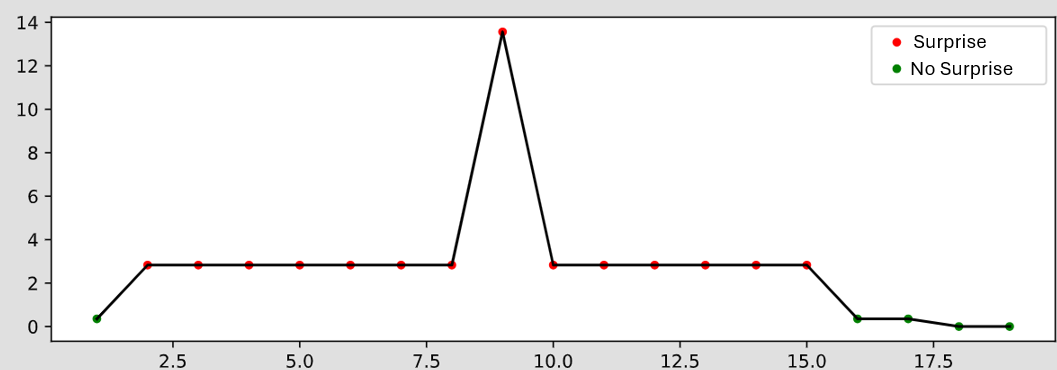}
         \caption{Surprises $Depth$}
         \label{fig:Depth_sorp_epoca_1}
     \end{subfigure}
     
     \bigskip
     \begin{subfigure}[b]{0.49\textwidth}
         \centering
         \includegraphics[scale=0.3]{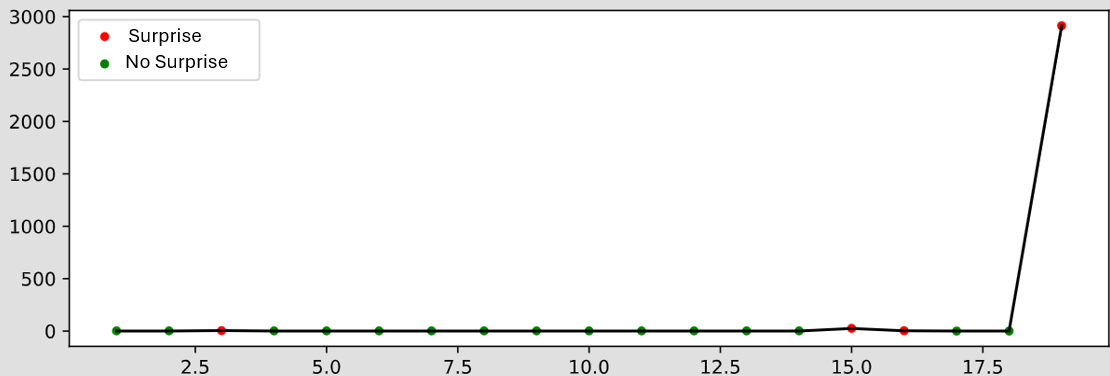}
         \caption{Surprises $Heading \; angle$}
         \label{fig:Heading_angle_sorp_epoca_1}
     \end{subfigure}
     \hfill
     \bigskip
     \begin{subfigure}[b]{0.49\textwidth}
         \centering
         \includegraphics[scale=0.3]{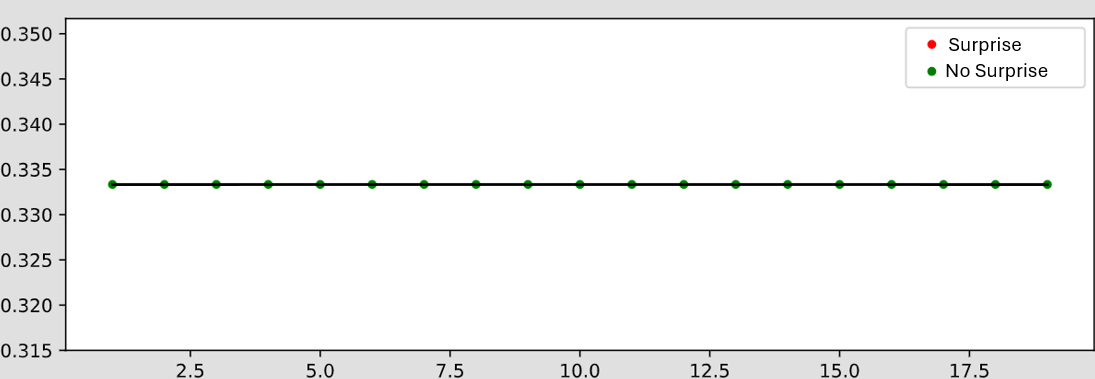}
         \caption{Surprises $ Target \;in\;Visual\;Field$}
         \label{fig:Target_in_Visual_Field_sorp_epoca_1}
     \end{subfigure}
     
    \caption{In figures(\ref{fig:Barrier_Tact_sorp_epoca_1},\ref{fig:Depth_sorp_epoca_1},\ref{fig:Heading_angle_sorp_epoca_1},\ref{fig:Target_in_Visual_Field_sorp_epoca_1}) are represented the values of $C_S$ for each observation variable throughout the iterations. Green dots mark the iterations in which the hypothesis test on the observation variables is not rejected and in red the iterations in which it is rejected. The graphs show only the first $18$ iterations of the epoch.}
    \label{fig:Sorpresas_epoca1}
\end{figure}

\subsubsection{Structure Learning with Hidden Variable}

Next, the hidden variable sub-graph (children and parents edges) introduction in the DDN takes place by structure learning. That is, a subset of the chance variables were selected as children and parents of the hidden variable in the previous sections by significant surprise detection. 
Now, we must determine the specific sub-graph topology. In this paper, for simplicity reasons we assume a “XM” topology (Figure \ref{fig:DN_with_HV}), based on the following facts:
\begin{enumerate}
    \item As we previously mentioned, the hidden variable at time $t$ can not be observed directly, yet we can assign a probability distribution based on the values of the $Obs_t$ related with the latent variable. This means that, based on the mechanistic causality definition, there exists a function such that the $HV$ values can be generated from the values of $Obs_t$ and a random noise, so there must be edges from $Obs_t$ to $HV$.
    \vspace{2mm}

    \item On the other hand, as discussed in section \ref{section:IPHV}, the latent variable influences the probability distribution of the observed variables at $t+1$, causing a surprise observation at $Obs_{t+1}$. Therefore, the latent variable influences the generation of the $Obs_{t+1}$ values, and consequently, the latent variable is part of the input mechanistic function. Since the hidden variable must model the causal relations of the latent variable, there must be edges from $HV$ to $Obs_{t+1}$.

    \item Additionally, the edges from $Obs_t$ to $Obs_{t+1}$ are kept since the presence of $HV$ does not replace the causal relations previously learned by the agent, but conditions them.
\end{enumerate}

\begin{figure}[ht]
\centering
\includegraphics[width=\textwidth]{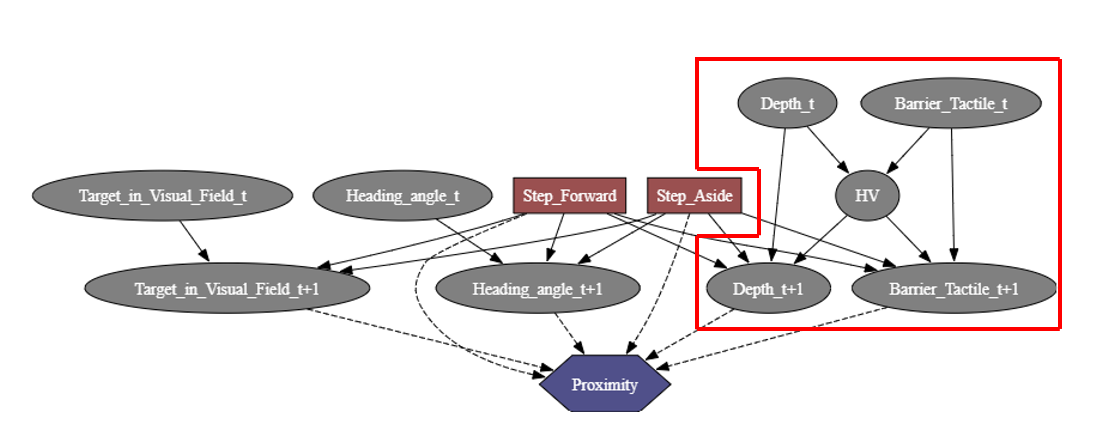}
\caption{DDN after introducing the new HV. The red box encloses the new learned subgraph. }
\label{fig:DN_with_HV}
\end{figure}

\subsection{CPT Estimation}
Next, the parameters of the CPTs, after introducing the hidden variable in the DDN´s graph, must be learned. That is, the hidden variable´s children and parents CPTs parameter estimation is performed by Hard Weighted Expectation Maximization. Specifically, after experiencing an epoch, the agent computes the probabilities of influence of the hidden variable at each time observation, as well as the difference in utility between the current and the previous observation. \\

This is followed by the application of Hard Weighted EM, which is iterated until convergence. In the initial iteration, the value of the hidden variable in each observation is imputed using the probability of influence, and each observation is assigned a weight based on the difference in utility $w_i$:
\begin{equation}
    \begin{split}
        &w_0=1,\\
        w_i= 1+ |U(\textbf{x}_{i-1})-&U(\textbf{x}_{i})| \;\; \forall i=1,\dots, t_{max}
    \end{split}
\end{equation}

The introduction of these $w_i$ serves to enhance the weighting of observations that exhibit minimal surprise but a significant impact on utility, while reducing the weighting of observations that exhibit a high degree of surprise but lack relevance to the observed differences in utility.
This approach enables the agent to learn a representation of HV that aligns with its existing knowledge, thereby enabling it to behave in a manner consistent with its previous behaviour in the absence of HV.
For the rest of the iterations, the weights associated with the observations are kept, while the value of the hidden value is imputed using the DDN and the parameters estimated in the previous iteration for  $P(HV=1)$.\\

\begin{algorithm}[ht]
    \SetAlgoLined
    \KwResult{CPT Estimation}
    \textbf{Inizialization:}\\
     Compute Probability of Influence of $HV$ ($IP_{HV}$) for each observation $i$ in dataset $D$ \;
     
     Compute weights $w_i$ \;
     
     \textit{Expectation $0$}: Build dataset $D^\prime_0$ from $D$, including imputed values of $HV$ using $IP_{HV}$ and weights $w_i$
     
     \textit{Maximization $0$} : Learn parameters $\Theta_0$ of $DDN_0$ by MLE or MAP\;
     
     \While{$|\Theta_{t} - \Theta_{t-1}| > \epsilon$}{
          \textit{Expectation-step:} Generate new dataset $D^\prime_t$ with imputed values for $HV$ using $DDN_{t-1}$\;
            
          \textit{Maximization-step:} Use $D^\prime_t$ for learning $\Theta_{t}$ with MLE or MAP\;
     } 
     \caption{CPT estimation with Hard Weighted EM}
     \label{alg:weighted EM}
\end{algorithm}

Returning to the Learning to Detour case, after learning, the hidden variable $HV$, a new transition must be learned so that expected observation $BT = 1$ is realized when the $\text{Step Forward}$ action is activated very close to the barrier. Also, after learning the hidden variable $HV$, a new transition must be learned so that the expected observation in $Depth$ is realized when the $\text{Step Forward}$ action is activated and the agent is close to the barrier. 
Lastly, when the agent is close to the end of the barrier, the expected observation of $Tactile = 0$ is realized, when $\text{Step Aside}$ is selected to the right.

\subsection{Learning to Detour behavior}

Thus, after learning the hidden variable $HV$, it can be observed from Figure \ref{fig:Actions_after_training}. that the agent's behaviour has undergone an important transformation compared to the behaviour before learning.

\begin{figure}[ht]
\centering
\includegraphics[width=0.7\textwidth]{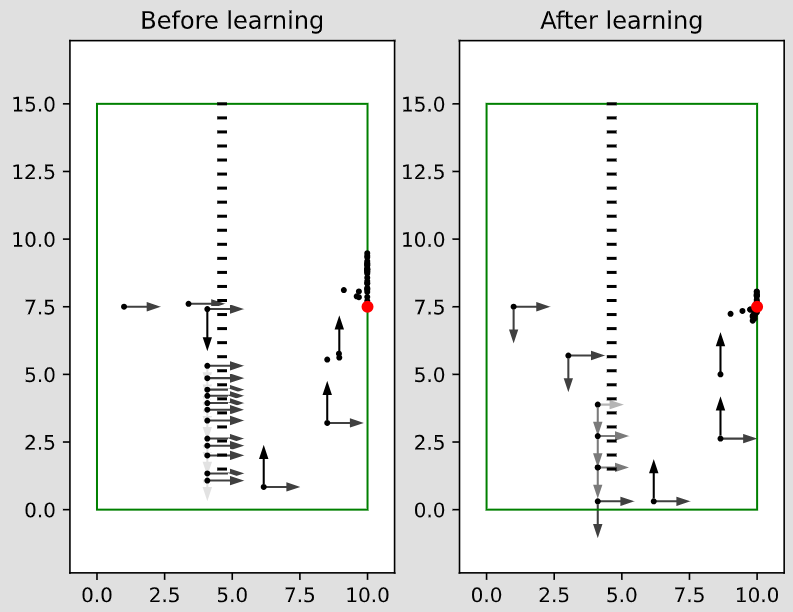}
\caption{Trajectories before and after learning $HV$}
\label{fig:Actions_after_training}
\end{figure}

The first thing to notice, is the shift from the agent's exclusive reliance on the $Step \; Forward$ action (before learning) to a more nuanced approach (during learning), involving a reduction in the power of $Step \; Forward$ decisions and the initiation of $Step \; Aside$ actions to the right. This is due to the changes made to the CPT of $BT_{t+1}$. As can be seen in the marginalized CPT of $BT_{t+1}$ before and after learning (Table \ref{tab:Table_BT_before and after}), there is a greater probability of a $BT=1$ in the presence of the $HV$. This CPT, along with the CPT learned from $HV$ (Table \ref{tab:CPT HV table}), enables the agent to act in this way.\\

\begin{table}[ht]
    \centering
    \caption{Marginalized CPT of $BT_{t+1}$ before and after learning.}
    \begin{tabular}{cccc}
        \cline{3-4}
                   &           & \multicolumn{2}{c}{$BT_{t+1}$} \\ \cline{3-4} 
        \multicolumn{2}{c}{$BT_t$} & $0$          & $1$         \\ \hline
        \multicolumn{2}{c}{0}  & $0.954$      & $0.046$     \\
        \multicolumn{2}{c}{1}  & $0.673$      & $0.327$     \\ \hline
    \end{tabular}
    \vfill
    \vspace{5mm}
    \begin{tabular}{cccccc}
        \cline{5-6}
                           &                   &            &           & \multicolumn{2}{c}{$BT_{t+1}$} \\ \cline{5-6} 
        \multicolumn{2}{c}{$HV$}                 & \multicolumn{2}{c}{$BT_{t}$} & 0          & 1         \\ \hline
        \multicolumn{2}{c}{\multirow{2}{*}{0}} & \multicolumn{2}{c}{0}  & $0.954$      & $0.046$     \\
        \multicolumn{2}{c}{}                   & \multicolumn{2}{c}{1}  & $0.673$      & $0.327$     \\
        \multicolumn{2}{c}{\multirow{2}{*}{1}} & \multicolumn{2}{c}{0}  & $0.862$      & $0.138$     \\
        \multicolumn{2}{c}{}                   & \multicolumn{2}{c}{1}  & $0.296$      & $0.704$     \\ \hline
    \end{tabular}
    \label{tab:Table_BT_before and after}
\end{table}

\begin{table}[ht]
\centering
\caption{Learned CPT of $HV$.}
\begin{tabular}{cccc}
\cline{3-4}
                   &       & \multicolumn{2}{c}{$HV$} \\ \cline{3-4} 
$BT_t$             & $D_t$ & \textbf{0}  & \textbf{1} \\ \hline
\multirow{5}{*}{0} & 0     & 0.815       & 0.185      \\
                   & 1     & 0.703       & 0.297      \\
                   & 2     & 0.250       & 0.750      \\
                   & 3     & 0.500       & 0.500      \\
                   & 4     & 0.500       & 0.500      \\
\multirow{5}{*}{1} & 0     & 0.500       & 0.500      \\
                   & 1     & 0.083       & 0.917      \\
                   & 2     & 0.111       & 0.889      \\
                   & 3     & 0.500       & 0.500      \\
                   & 4     & 0.500       & 0.500      \\ \hline
\end{tabular}
\label{tab:CPT HV table}
\end{table}
The second significant event occurs when the agent collides with the barrier. During learning,  the agent reduces the power of the $Step \; Forward$ action. This is a consequence of the agent's adaptation, which has led to a reduction in the expected utility of activating $Step \; Forward$ when the probability of $HV=1$ is high.
It is important to notice that at this moment, another important change takes place within our framework. That is, one of the key objectives is to enable the agent to learn from unexpected outcomes, which will enable it to make more accurate predictions and, therefore, avoid surprises in the future. In our case, Figures \ref{fig:Surprises_BT} and \ref{fig:Surprises_D} illustrate that the surprises associated with $BT$ (Barrier Tactile) and $D$ (Depth) are reduced in comparison to those observed prior to learning. This clearly indicates that the predictions made by the agent are more accurate with respect to the environment when the $HV$ factor is present.
In summary, before learning, the agent tends to go straight to the barrier and bumps into it in many  occasions. Yet, after learning, the agent tends to avoid the barrier by side stepping around it in its trajectory to the target. 

\begin{figure}[ht!]
\centering
\includegraphics[width=0.5\textwidth]{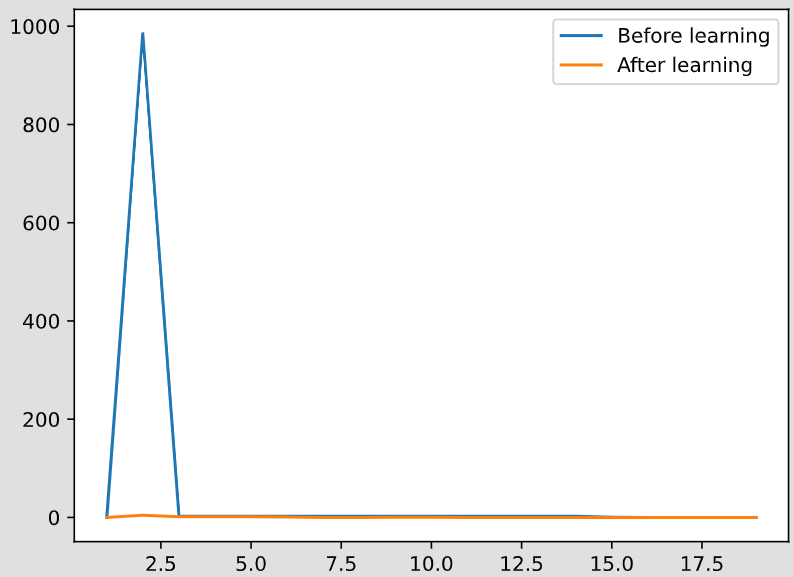}
\caption{Surprise coefficients over $BT$ (Barrier Tactile)  before and after learning $HV$.}
\label{fig:Surprises_BT}
\end{figure}

\begin{figure}[ht!]
\centering
\includegraphics[width=0.5\textwidth]{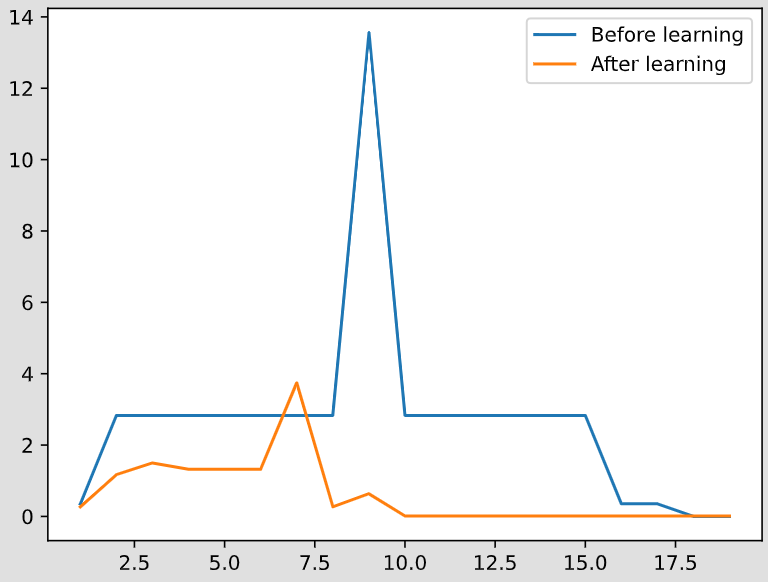}
\caption{Surprise coefficients over $D$ (Depth) before and after learning $HV$.}
\label{fig:Surprises_D}
\end{figure}

\begin{figure}[ht!]
\centering
\includegraphics[width=0.5\textwidth]{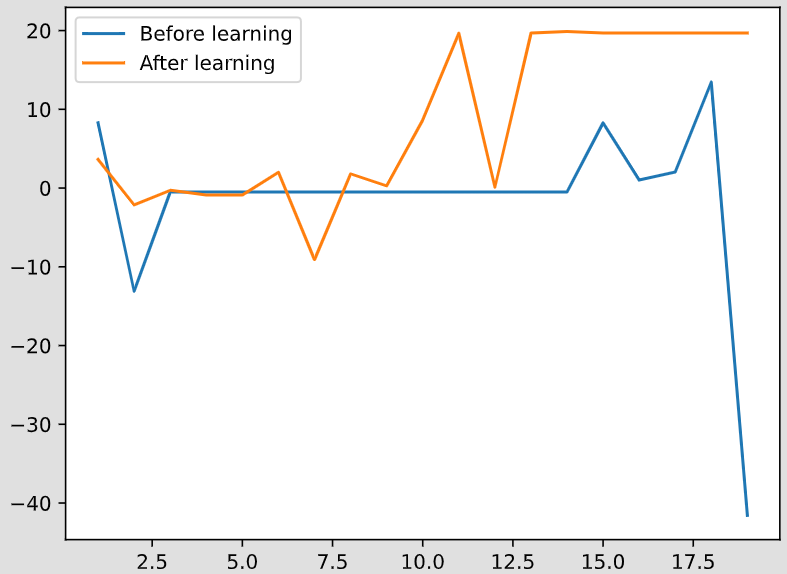}
\caption{Surprise coefficients over the Utility before and after learning $HV$.}
\label{fig:Surprises_Utility}
\end{figure}

Finally, after overcoming the obstacle, the agent makes essentially the same actions in both scenarios, demonstrating that the incorporation of the $HV$ and the acquisition of CPTs did not impede the agent's appropriate behaviour when the probability of $HV = 1$ is low, allowing the agent to successfully reach the target.

\section{Discussion}
This is also a first step towards a theory of Active Structure Learning for Resilient autonomous agents and robots. They must be able to construct causal internal models (Bongard et al.; Corbacho et al)\cite{Bongard2006,Corbacho2005}. As Bongard et al. \cite{Bongard2006} express, animals sustain the ability to operate after injury by creating qualitatively different compensatory behaviors. Although such robustness would be desirable in engineered systems, most machines fail in the face of unexpected damage. Corbacho et al.\cite{Corbacho2005} is one of the first complete models of animal behavior learning after a lesion (that impairs the animal´s motor capabilities)  by internal causal model construction. The research here presented is also related to work on lifelong machine learning (Chen and Liu)\cite{Chen2018}. That is, a machine learning paradigm that continuously learns by accumulating past knowledge that it then uses in future learning and problem solving. It is also related to the field of probabilistic robotics, which is concerned with perception and control in the face of uncertainty to endow robots with a new level of robustness in real-world situations \cite{Thrun2005}. It is also related to the field of autonomous mobile robot navigation (Arkin, 1998, 2005; Bekey, 2005; Pandey et al., 2017)\cite{Arkin2005,Arkin1998a,Bekey2005,Pandey2017}. 

\section{Work in progress}
\label{sect:Work in progress}
In this section we briefly describe work in progress that we do not include in this paper, due to space limitations, or level of maturity of the current development.

\subsection{Other simulations and possible real world applications}
One of the limiting aspects of the work presented in this specific paper, is that we have only included a specific example within the field of simulated robotics. Nevertheless, we have already started to work with other more realistic examples that will give us the opportunity to discover aspects where our framework can be improved.
The first extension, we are already  working on, consists on a robotic simulation of a Kephera robot, first within the Webots simulator, to later transfer it to the real physical robot. 
This work is a continuation of previous work on robotic machine learning implementations \cite{Lago00,Lago01,Weitzenfeld98}, but now with a much more robust, probabilistic and complete causal structure learning framework.
In this context, we have found that sensors and actuators are sometimes very noisy, leading to cases where the observed value does not match what should have been observed, which could lead to surprises caused by situations where actually there is no latent variable.  Within this example we also wish that the observation and action variables are vector random variables, not just scalar random variables, which would lead us to extend our framework to deal with this more general case. 
To solve the first problem, we look at the direct use of the surprise divergence as a surprise coefficient, so it could be used a probability distribution, that is not a degenerate one, over possible outcomes of the observation variable conditioned on the value observed. For the second extension, we are working on extending the surprise divergence to vectorial spaces, looking at the similarities of the divergence with the Mahalanobis distance.\\

Another example we have started working on is a practical example within the medical field. In this case, we have already developed a  digital twin of the human respiratory system, where we can simulate patients with different diseases, and our framework aims to capture the causal relationships and latent variables that are simulated within the digital twin. Some of the problems we face in this situation are dealing with multiple latent variables that may be related to each other or have relationships in common with the observed variables, in addition to dealing with continuous variables. In order to cope with this, we are strengthening  the framework to be able to deal with cases where there are several surprises in the utility, yet not all the observation variables that undergo a surprise coincide, but only some, which would be the parents in common that would have the hidden variables. We also hope that this example, in the future, can be related to Game Theory, since the system could be considered as a non-cooperative game.  It would be necessary to give the digital twin some kind of utility that would make it more difficult for the other party to guess its structure. Following this path, we could extend the framework to act in situations where the causal structure of all variables, observed or unobserved, may change over time.

\subsection{Extending the theoretical framework of the Theory of surprise}
We also want to make our theory of surprise more robust, and to that end, we need to continue to work on generalizing it to make it more versatile.
Apart from the cases mentioned in the previous section, which entail their respective extensions of the framework, we are working on finding out how the entropy and information dispersion estimators tend to be distributed according to the sample, and whether these distributions depend on the cardinality. If this is the case, we would try to add a normalization factor to the surprise divergence, which would allow us to compare distributions that do not have the same cardinality, since this is currently possible with the surprise divergence as it is, but it is very uninformative since, like the Kullback-Leibler divergence, the result is $\infty$. To deal with this, we try to solve it from Bayesian statistics, and assume that the parameter estimators come from samples of a Dirichlet variable with a uniform distribution over the parameters.

\subsection{Exploration behaviour after surprise event}
We are also working on an exploration algorithm that is activated each time a latent variable is detected. This would work in such a way that, when a latent variable is detected, we first observe the tuple with the values of the observation variables that have been observed at that time, and make a change in the utility function that prioritizes the agent to visit the tuples of values that are probabilistically close to the tuple of values that have been observed. If, when exploring these nearby states, there are also surprises observed with respect to the original utility function, then the tuples of nearby values that the agent would have to explore, would be added to a queue.
In this way, we would avoid having the agent repeat the epochs from the beginning in order to learn, and the learning process would be done in a single epoch, but with a greater number of iterations.
This idea is quite related to the concept of curiosity \cite{SCHMIDHUBER2009,sontakke2021}, so this could be an opportunity to formally relate the concept of surprise and curiosity of an agent in a future.
\section{Future Work}
The framework proposed in this paper allows for multiple generalizations and extensions. For completeness reasons, we list here some of them that we intend to address in future work.
\begin{itemize}
    \item A comparative study of ACSLWL with some existing causal structural learning methods. Although we have seen that the framework gives good results in the designed environments, it remains to be compared whether the results with respect to other causal structural learning algorithms that also deal with latent variables. Before making the comparison, it would first be necessary to do tests in which causal and latent variable learning are done in a single process, unlike in this work where we have focused on latent variable learning for simplicity purposes.
    \vspace{2mm}
    
    \item One potential issue that may arise from dealing with latent variables is that the complexity of the algorithms can grow and make them infeasible in real-world scenarios. Consequently, doing a complexity analysis, both at the computational and memory level, would be a good next step to make the proposed algorithms more robust.
    \vspace{2mm}
    
    \item After the complexity analysis, the next step would be to optimize the algorithms we have presented here to make them more efficient. We are beginning to make online versions of the algorithms, as this would allow the algorithms to be quite more efficient and consistent, as well as bringing us closer to making online and automatic discovery of latent variables possible \cite{chen2022automated}. 
    In this regard, we are working on online algorithms for estimating the entropy of a random variable and the transfer entropy of a process from the data observed by the agent. These algorithms are based on  reformulations of the entropy formula, and we have yet to demonstrate the differences in complexity with respect to traditional estimation algorithms. Within this subsection, the development of the exploration algorithm (described in Work in progress) would also be included, as it would be necessary to prove that it is indeed more efficient than the algorithm we have described in this paper.
    \vspace{2mm}
    
    \item Throughout the framework, we have actually assumed that there is only one latent variable affecting the agent at any given point in time. In the real world, this might not be true, so we have yet to generalize our framework to deal with cases where there is more than one variable. Also, we have assumed that the latent variables are binary, and this is not usually the case, so our framework should also be able to work with latent variables of higher cardinalities.
    We propose  that the distinction of latent variables will depend on the observed variables receiving surprises. That is, two latent variables will be distinct if they do not completely coincide in the observed variables with surprises.\\
    Regarding the cardinality of latent variables, we believe that a good first step would be to reason about the interpretation of the meaning of a latent variable with more than two values. On the one hand, we could understand that the cardinality of latent variables depends on the cardinalities of the related observed variables, and on what is the product of a ``clustering'' of states. On the other hand, we could understand that the cardinality of latent variables depends on the causal relationships themselves, and that a latent variable itself is a collection of different (binary) latent variables that behave in a very similar way, or that share causal relationships with some very important observed variables, and therefore can be understood as a latent variable that has variants or different behaviors.
    \vspace{2mm}
    
    \item Another limitation is the assumption that the observation and action variables are discrete. The reason for working with this assumption is that, although a priori, it does not present a problem at the theoretical level, at the practical level there are usually many difficulties when using decision networks with continuous observation or action variables. Therefore, it remains to give a formal generalization to include continuous variables.
    \vspace{2mm}
    
    \item Finally, in some cases, the impact of latent variables is observed during periods of relevant surprises rather than surprises at a specific time. It may also be the case that the surprise in the observation variables is observed with a time lag with respect to the instant of the influence of the latent variable. Thus, it remains to generalize the theoretical framework to account for all these possibilities.
    
\end{itemize}


\begin{credits}
\subsubsection{\ackname} 
Fernando Corbacho is very grateful to Michael A. Arbib and Christoph von der Malsburg for very enlightening discussions on adaptive biological agents. Wei Min Shen for autonomous agents and robots.  George A. Bekey for stimulating discussions on autonomy. Alfredo Weitzenfeld and Ronald C. Arkin´s groups for implementation ideas in Robotics.

\subsubsection{\discintname}
The authors have no competing interests to declare that are
relevant to the content of this article.
\end{credits}

%
%
%


\appendix
\section{Divergence surprise proofs}\label{Appendix divergence proofs}

\begin{prop}\label{proof prop1}
$D_s(Q||P)=0 \Leftrightarrow Q=P$.
\end{prop}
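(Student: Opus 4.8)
The plan is to split into the two implications, after first disposing of one degeneracy. The quantity $\sqrt{V_I(P)}$ sits in the denominator of $D_S(Q\|P)$, and $V_I(P)$ is the variance of $-\log P(X)$ under $P$; it vanishes exactly when $-\log p_i$ is constant on the support, i.e. exactly when $P$ is uniform. So I would first remark that we take $V_I(P)>0$ (otherwise $D_S(Q\|P)$ is a $0/0$ expression and the statement must be read with a convention), and that under this assumption $D_S(Q\|P)=0$ is equivalent to the numerator $H(Q,P)-H(P)$ being zero; this is the form I would work with throughout.

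The implication $Q=P\Rightarrow D_S(Q\|P)=0$ is immediate: setting $Q=P$ gives $H(P,P)=-\sum_i p_i\log p_i=H(P)$, so the numerator of $D_S(Q\|P)$ in (\ref{eq:Surprise Divergence}) is $H(P)-H(P)=0$, hence $D_S(P\|P)=0$. No further input is needed here.

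For the converse, $D_S(Q\|P)=0$ forces $H(Q,P)=H(P)$, and in particular $H(Q,P)<\infty$, so $\operatorname{supp}Q\subseteq\operatorname{supp}P$. The natural next move is to invoke the reformulation (\ref{eq:reformulation}), namely $D_S(Q\|P)=\bigl(D_{KL}(Q\|P)+H(Q)-H(P)\bigr)/\sqrt{V_I(P)}$, which turns the hypothesis into $D_{KL}(Q\|P)=H(P)-H(Q)$. Since $D_{KL}(Q\|P)\ge 0$ with equality iff $Q=P$ (Gibbs' inequality), it would suffice to argue that the right-hand side $H(P)-H(Q)$ cannot be strictly positive; then $D_{KL}(Q\|P)=0$ and $Q=P$ would follow.

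I expect this last step to be the crux, and the place where the argument must be made with care. The bare condition $H(Q,P)=H(P)$ is a single affine constraint on $Q$, of the form $\sum_i q_i\log(1/p_i)=H(P)$, so together with $\sum_i q_i=1$ it cuts out an affine slice of the simplex through the point $P$, which for $\lvert\operatorname{supp}P\rvert\ge 3$ contains vectors other than $P$ itself; hence the implication cannot come from information-theoretic inequalities alone. I would therefore spend most of the effort isolating the extra hypothesis that actually makes ``$H(Q,P)=H(P)\Rightarrow Q=P$'' valid — for instance the binary case $\lvert\operatorname{supp}P\rvert=2$ with $P$ non-uniform (where the two affine constraints meet only at $Q=P$), or the situation in which $Q$ is the degenerate distribution used to define the surprise coefficient (\ref{eq:surprise_coefficient}), or an argument via strict convexity of $t\mapsto t\log t$ under an additional absolute-continuity/support condition — and then checking that this hypothesis is in force wherever Proposition \ref{prop:D_s_tend_0} is later applied.
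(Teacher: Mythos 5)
Your analysis is more careful than the paper's own proof, and your suspicion about the converse is justified. The paper disposes of the whole proposition in one line by writing $H(Q,P)=H(P)\Leftrightarrow Q=P$, offering no argument for the backward implication of that equivalence — and, as you observe, that implication is simply false once $\lvert\operatorname{supp}P\rvert\ge 3$. Concretely, take $P=(\tfrac12,\tfrac14,\tfrac14)$ and $Q=(\tfrac12,\tfrac12,0)$: then $H(Q,P)=\tfrac12\log 2+\tfrac12\cdot 2\log 2=\tfrac32\log 2=H(P)$ while $V_I(P)=\tfrac14(\log 2)^2>0$, so $D_S(Q\Vert P)=0$ with $Q\neq P$. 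The condition $H(Q,P)=H(P)$ is, as you say, one affine constraint on the simplex, and the level set through $P$ is generically a positive-dimensional face-slice. Your forward direction ($Q=P\Rightarrow D_S=0$) coincides with what the paper needs and is fine, and your remark that $V_I(P)=0$ exactly when $P$ is uniform identifies a second degeneracy the paper never addresses (for uniform $P$ the divergence is $0/0$).

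So the "gap" here is not in your argument but in the statement: no amount of care closes the converse without an extra hypothesis, and your candidate repairs are the right ones to consider. The binary non-uniform case does work (two independent affine constraints in a two-dimensional simplex meet only at $P$), and it is worth noting that in the paper's actual applications $Q$ is either the empirical distribution $\hat P$ (Theorem \ref{theorem:Div_normal}, where only the forward direction and a CLT are used) or a degenerate point mass (the surprise coefficient), so the false direction of Proposition \ref{prop:D_s_tend_0} is mostly load-bearing in the converse of Theorem \ref{theorem:Expect_max_Divergencia_S}, whose proof cites exactly the step you flagged ($D_S^2\to 0\Rightarrow q_i-p_i\to 0$) and therefore inherits the same defect. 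If you want a true statement with the paper's intended content, replace the conclusion by "$D_S(Q\Vert P)=0$ iff $H(Q,P)=H(P)$" or add the hypothesis $\operatorname{supp}Q\subseteq\operatorname{supp}P$ together with $k=2$, or work instead with $D_{KL}$ via the reformulation (\ref{eq:reformulation}) and Gibbs' inequality as you propose, accepting that the sign of $H(P)-H(Q)$ must then be controlled separately.
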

\begin{proof}
    Assume that $D_S(Q||P)=0$, then
    \[0=D_S(Q||P)=\frac{H(Q,P)-H(P)}{\sqrt{V_I(P)}} \Leftrightarrow H(Q,P)=H(P)\Leftrightarrow\]
    \[\Leftrightarrow Q=P.\]
\end{proof}

\begin{thm}\label{proof theorem1}
    $D_{KL}(Q||P) \xrightarrow{} 0$ $\Leftrightarrow$ $D_S(Q||P)^2 \xrightarrow{} 0$.
\end{thm}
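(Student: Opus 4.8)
The plan is to read the equivalence off the reformulation (\ref{eq:reformulation}),
\[
D_S(Q||P) = \frac{D_{KL}(Q||P) + H(Q) - H(P)}{\sqrt{V_I(P)}},
\]
observing that $V_I(P)$ depends only on the fixed reference distribution $P$ and is a strictly positive constant: it is the variance under $P$ of the information content $-\log P(X)$, so it vanishes only when $P$ is uniform on its support, which is exactly the degenerate case in which $D_S$ is undefined and may be excluded. Hence $D_S(Q||P)^2 \to 0$ is equivalent to $D_S(Q||P) \to 0$, which is in turn equivalent to $D_{KL}(Q||P) + H(Q) - H(P) \to 0$, i.e.\ to convergence of the cross-entropy $H(Q,P) \to H(P)$. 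Everything then reduces to relating this to $D_{KL}(Q||P) \to 0$.

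For the forward direction I would assume $D_{KL}(Q||P) \to 0$ and first deduce $Q \to P$ (pointwise, since the outcome space is finite) via Pinsker's inequality, which bounds the total variation distance by the square root of the divergence. Continuity of the Shannon entropy on the probability simplex (with the convention $0\log 0 = 0$) then gives $H(Q) \to H(P)$, so the numerator in the reformulation tends to $0+0=0$; dividing by the fixed constant $\sqrt{V_I(P)}$ yields $D_S(Q||P) \to 0$, hence $D_S(Q||P)^2 \to 0$.

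For the converse I would assume $D_S(Q||P)^2 \to 0$, hence $D_S(Q||P) \to 0$, and upgrade this to $Q \to P$ by compactness: the admissible $Q$ (those supported inside $\mathrm{supp}(P)$, so that $D_S$ and $D_{KL}$ are finite) lie in a compact sub-simplex on which $Q \mapsto H(Q,P)$ is linear, hence continuous; any subsequential limit $Q^\ast$ therefore satisfies $D_S(Q^\ast||P) = 0$, so $Q^\ast = P$ by Proposition \ref{prop:D_s_tend_0}. Since every subsequential limit equals $P$, we get $Q \to P$, and then $D_{KL}(Q||P) \to D_{KL}(P||P) = 0$ by continuity of the Kullback--Leibler divergence in its first argument. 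Assembling the two directions proves the equivalence (and, as the corollary notes, makes minimising $D_S^2$ coincide with minimising $D_{KL}$, hence with maximum likelihood).

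The hard part will be the converse: the identity (\ref{eq:reformulation}) alone only yields $H(Q,P)\to H(P)$, which does not by itself pin down $Q$, so one genuinely has to route through Proposition \ref{prop:D_s_tend_0} together with compactness of the simplex to promote $D_S\to 0$ to $Q\to P$. A secondary nuisance is the bookkeeping around supports and the degenerate case $V_I(P)=0$, where the claim is vacuous.
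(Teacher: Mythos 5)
Your proof follows the same route as the paper's. The forward direction is the paper's argument in slightly different packaging: the paper expands \(D_S(Q\|P)^2\) into three terms \(A+B+C\) and disposes of each using exactly your two ingredients, namely \(D_{KL}(Q\|P)\to 0 \Rightarrow q_i-p_i\to 0\) (Pinsker) and the resulting \(H(Q)-H(P)\to 0\) (continuity of entropy), plus boundedness of \(|H(Q)-H(P)|\) for the cross term; working with the unsquared numerator and squaring at the end, as you do, is equivalent. For the converse, both you and the paper reduce to Proposition \ref{prop:D_s_tend_0}; your compactness/subsequence argument makes explicit a step the paper compresses into one line, and your bookkeeping on supports and on the degenerate case \(V_I(P)=0\) is a welcome addition the paper omits.

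However, the converse has a genuine gap, and your own warning points straight at it: you observe that \(H(Q,P)\to H(P)\) ``does not by itself pin down \(Q\),'' and then rely on Proposition \ref{prop:D_s_tend_0} to do exactly that pinning down. But \(D_S(Q^*\|P)=0\) \emph{is}, by definition, the single linear condition \(H(Q^*,P)=H(P)\), i.e. \(\sum_i (q_i^*-p_i)\log p_i=0\). For \(k\ge 3\) this is one linear equation on the \((k-1)\)-dimensional simplex, so it is satisfied by a \((k-2)\)-dimensional family of \(Q^*\neq P\); concretely, for \(P=(\tfrac12,\tfrac14,\tfrac14)\) and \(Q=(\tfrac12,\tfrac14+\epsilon,\tfrac14-\epsilon)\) one has \(D_S(Q\|P)=0\) while \(D_{KL}(Q\|P)>0\). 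Hence the subsequential limit produced by your compactness argument need not equal \(P\): Proposition \ref{prop:D_s_tend_0}, whose proof asserts \(H(Q,P)=H(P)\Leftrightarrow Q=P\) without justification, is false for \(k\ge 3\), and the implication \(D_S(Q\|P)^2\to 0\Rightarrow D_{KL}(Q\|P)\to 0\) fails as stated. This defect is inherited from the paper --- your argument is no weaker than the one in Appendix \ref{Appendix divergence proofs} --- but the extra care in your converse does not repair it; only the forward implication survives without additional hypotheses on \(P\) (the full equivalence does hold for \(k=2\) with \(p_1\neq p_2\)).
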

\begin{proof}
    \[D_S(Q||P)^2= \frac{1}{V_I(P)}\left(D_{KL}(Q||P) + H(Q)-H(P)\right)^2=\]
    \[=\frac{D_{KL}(Q||P)^2}{V_I(P)} +\frac{2D_{KL}(Q||P)}{V_I(P)}(H(Q)-H(P)) + \frac{(H(Q)-H(P))^2}{V_I(P)}=\]
    \[=A \; + \; B \; + \; C.\]
    
    \begin{itemize}
        \item Firstly, if $D_{KL}(Q||P)\rightarrow 0 \; \Rightarrow \; A\rightarrow 0$.
        
        \item Furthermore, if $D_{KL}(Q||P)\rightarrow 0 \; \Rightarrow \;  q_i-p_i\rightarrow 0 \; \forall i$, which implies that $H(Q)-H(P)\rightarrow 0$. Therefore, $C \xrightarrow{}0$.
        
        \item Finally, in $B$ it can be shown that $\frac{2D_{KL}(Q||P)}{V_I(P)} \xrightarrow{}0$ multiplies to something that is bounded, $|H(Q)-H(P)|\leq \max\{H(Q),H(P)\}\leq log(n)$, hence $B\xrightarrow{} 0$.
     
    \end{itemize}

    With regard to the opposite implication, from proposition \ref{prop:D_s_tend_0} it follows that $D_S(Q||P)^2 \rightarrow 0 \; 
    \Rightarrow \; q_i-p_i\rightarrow 0 \; \forall i$, thus $D_{KL}(Q||P) \rightarrow 0$.
    
\end{proof}

\begin{thm}
    \label{proof theorem2}
     Let $\{X_1,\dots, X_n\}$ be a sample of independent discrete random variables, each one with the same probability distribution $P=[p_1,\dots,p_k]$, then
     \[\sqrt{n} \cdot D_S(\hat{P}||P) \xrightarrow{d} N(0,1),\] where $\hat{P}=[\hat{p}_1,\dots,\hat{p}_n]$ is the maximum likelihood estimator of $P$ from $\{X_1,\dots, X_n\}$.  
\end{thm}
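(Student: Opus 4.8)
The plan is to recognize $H(\hat P,P)$ as the sample mean of i.i.d.\ random variables and then invoke the classical (Lindeberg--Lévy) central limit theorem. Since $P=[p_1,\dots,p_k]$ is a categorical distribution, its maximum likelihood estimator is just the vector of empirical frequencies, $\hat p_i=\frac1n\sum_{j=1}^n \mathbf 1[X_j=x_i]$. Plugging this into the cross entropy gives
\[
H(\hat P,P)=-\sum_{i=1}^k \hat p_i\log p_i=\frac1n\sum_{j=1}^n\bigl(-\log P(X=X_j)\bigr)=\frac1n\sum_{j=1}^n Y_j,
\]
where $Y_j:=-\log P(X=X_j)$ are i.i.d., each taking the value $-\log p_i$ with probability $p_i$.

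The next step is to compute the first two moments of $Y_j$. By definition of entropy and of information dispersion,
\[
\mathbb E[Y_j]=-\sum_{i=1}^k p_i\log p_i=H(P),\qquad
\operatorname{Var}(Y_j)=\sum_{i=1}^k p_i(\log p_i)^2-H(P)^2=V_I(P).
\]
Assuming $P$ is non-degenerate (not a point mass), we have $0<V_I(P)<\infty$, so the CLT applies: $\sqrt n\bigl(\frac1n\sum_{j=1}^n Y_j-H(P)\bigr)\xrightarrow{d}N(0,V_I(P))$, i.e.\ $\sqrt n\,\bigl(H(\hat P,P)-H(P)\bigr)\xrightarrow{d}N(0,V_I(P))$.

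Finally, dividing by the deterministic constant $\sqrt{V_I(P)}$ and recalling the definition $D_S(\hat P\|P)=\bigl(H(\hat P,P)-H(P)\bigr)/\sqrt{V_I(P)}$ from \eqref{eq:Surprise Divergence} yields
\[
\sqrt n\cdot D_S(\hat P\|P)=\frac{\sqrt n\,\bigl(H(\hat P,P)-H(P)\bigr)}{\sqrt{V_I(P)}}\xrightarrow{d}N(0,1).
\]
I do not expect any genuine obstacle here: the whole argument is a reparametrization reducing the claim to the textbook CLT for a bounded-variance i.i.d.\ average. The only point requiring care is the non-degeneracy hypothesis $V_I(P)>0$ (otherwise $D_S$ is a $0/0$ form), which should be stated as a standing assumption; a secondary, purely cosmetic, remark is that one could alternatively present the same computation through the reformulation \eqref{eq:reformulation} of $D_S$ in terms of $D_{KL}$ plus the entropy gap, but the direct route above is the cleanest.
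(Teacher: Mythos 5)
Your argument is exactly the paper's: it writes $H(\hat P,P)$ as the sample mean of the i.i.d.\ information values $-\log P(X=X_j)$, identifies their mean and variance as $H(P)$ and $V_I(P)$, and applies the classical CLT. The only (minor) addition is your explicit remark that $V_I(P)>0$ must be assumed, which the paper leaves implicit.
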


\begin{proof}
Given the sample $\textbf{X}=\{X_1,\dots,X_n\}$, the sample formed by $\textbf{W}=W_1,\dots,W_n\}$, where $W_i=-log(P(X=X_i))$, is also a sample of independent discrete random variables, which follow the probability distribution $P$.\\

Therefore:
\[\mu_\textbf{W}=E_P[-log(P(X))]=H(P), \]
\[\sigma^2_\textbf{W}=E_P[(-log(P(X))-H(P))^2]=V_I(P),\]
\[ S_n=\sum_{i=1}^n W_i=\sum_{i=1}^n -log(P(X=X_i)=-n\sum_{j=1}^k \hat{p}_jlog(p_j)=n\cdot H(\hat{P},P).\]

Using the Central Limit Theorem, it follows that
\[\frac{S_n - n\mu_\textbf{W}}{\sqrt{n\sigma_\textbf{W}^2}} \xrightarrow{d} N(0,1),\]
that is to say,
\[\frac{S_n - n\mu_\textbf{W}}{\sqrt{n\sigma_\textbf{W}^2}}=\frac{n}{\sqrt{n}}\frac{H(\hat{P},P) - H(P)}{\sqrt{V_I(P)}}=\sqrt{n}\cdot D_S(\hat{P}||P) \xrightarrow{d}N(0,1).\]
\end{proof}

\begin{prop}\label{proof prop2}
    Let $\{X_1,\dots, X_n\}$ be a sample of independent random variables, each one with the same probability distribution $P=[p_1,\dots,p_k]$ and 
    \[\hat{V}_I(P)=\frac{1}{n-1}\sum_{i=1}^n (-log(P(X=X_i))-H(\hat{P},P)^2),\] then:
    \[\hat{V}_I(P) \xrightarrow{P} V_I(P).\]
\end{prop}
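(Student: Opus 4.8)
The plan is to recognise $\hat V_I(P)$ as the ordinary (bias-corrected) sample variance of the i.i.d. sequence $W_i := -\log(P(X=X_i))$ and to conclude by the Weak Law of Large Numbers together with Slutsky's theorem. First I would record, exactly as in the proof of Theorem~\ref{proof theorem2}, that the $W_i$ are i.i.d. with $E_P[W_i] = H(P)$ and variance $V_I(P)$, and that the sample mean satisfies $\bar W_n := \tfrac1n\sum_{i=1}^n W_i = H(\hat P, P)$. A cheap but essential observation is that, since $X$ has finite support $\{x_1,\dots,x_k\}$ with probabilities $p_1,\dots,p_k$ (those $j$ with $p_j=0$ never occurring), each $W_i$ takes values in the finite set $\{-\log p_j : p_j>0\}$; hence $W_i$ is bounded and all of its moments — in particular $E_P[W_i^2] = V_I(P) + H(P)^2$ — are finite. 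This finiteness is what legitimises every limit used below.

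Next I would invoke the algebraic identity
\[
\hat V_I(P) = \frac{1}{n-1}\sum_{i=1}^n (W_i - \bar W_n)^2 = \frac{n}{n-1}\left(\frac1n\sum_{i=1}^n W_i^2 - \bar W_n^{\,2}\right).
\]
By the Weak Law of Large Numbers applied to the i.i.d. sequences $\{W_i\}$ and $\{W_i^2\}$ (both with finite mean), $\bar W_n \xrightarrow{P} H(P)$ and $\tfrac1n\sum_{i=1}^n W_i^2 \xrightarrow{P} V_I(P) + H(P)^2$. The continuous mapping theorem gives $\bar W_n^{\,2} \xrightarrow{P} H(P)^2$, and $\tfrac{n}{n-1}\to 1$ deterministically. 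Slutsky's theorem then yields
\[
\hat V_I(P) \xrightarrow{P} 1\cdot\bigl(V_I(P)+H(P)^2 - H(P)^2\bigr) = V_I(P),
\]
which is the claim.

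There is no deep obstacle here; the only points requiring a little care are the bookkeeping that rewrites the centred sum of squares as $\tfrac1n\sum_i W_i^2 - \bar W_n^{\,2}$, and the verification that the relevant moments are finite, which is immediate from the finite cardinality of $X$. If one preferred to avoid the Slutsky/continuous-mapping machinery, an alternative route would be to show directly that $\hat V_I(P)$ is asymptotically unbiased and that its variance tends to $0$ — again exploiting boundedness of the $W_i$ — and then deduce $L^2$-convergence and hence convergence in probability; I would regard the WLLN-plus-Slutsky argument as the cleaner of the two.
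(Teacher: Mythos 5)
Your proposal is correct and follows essentially the same route as the paper: both recognise $\hat V_I(P)$ as the sample variance of the i.i.d.\ information values, apply the decomposition $\frac{n}{n-1}\bigl(\frac1n\sum_i W_i^2-\bar W_n^2\bigr)$ (the paper just centres first, writing $D_i=W_i-H(P)$), and conclude via the Law of Large Numbers together with the continuous mapping/Slutsky step. Your explicit remark that the finite support of $X$ guarantees boundedness of the $W_i$, and hence finiteness of all moments, is a small but welcome addition the paper leaves implicit.
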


\begin{proof}
    \[\hat{V}_I(P)=\frac{1}{n-1}\sum_{i=1}^n \left[-log(P(X=X_i))-H(\hat{P},P)\right]^2=\]
    \[=\frac{1}{n-1}\sum_{i=1}^n \left[-log(P(X=X_i))-H(P) + H(P)-\frac{1}{n}\sum_{j=1}^n -log(P(X=X_j))\right]^2=\]
    \[=\frac{1}{n-1}\sum_{i=1}^n \left[-log(P(X=X_i))-H(P)-\frac{1}{n}\left(\sum_{j=1}^n -log(P(X=X_j) -H(P) \right)\right]^2.\]
    Labeling $D_i=-log(P(X=X_i) -H(P)$ y $\bar D=\frac{1}{n}\sum_i D_i$, it follows that
    \[\hat{V}_I(P)=\frac{1}{n-1}\sum_{i=1}^n \left[D_i -\bar D\right]^2=\frac{1}{n-1}\left[\sum_{i=1}^n D_i^2 +n\bar D^2 -2\bar D\sum_i^n D_i\right]=\]
    \[=\frac{n}{n-1}\left[\frac{1}{n}\sum_{i=1}^n D_i^2 -\bar D^2 \right].\]

    \noindent On one hand, using the Law of Large Numbers:
    \[\frac{1}{n}\sum_{i=1}^n D_i^2 \xrightarrow{P} E_P[D_i^2]=E_P[(-log(P(X=X_i) -H(P))^2]=V_I(P).\]
    On the other hand, also using the Law of Large Numbers,
    \[\bar D\xrightarrow{P}E_P[-log(P(X=X_i) -H(P)]=0.\]
    Being a convergence in probability to a degenerate distribution, it is also held that $\bar D \xrightarrow{d} 0$, and the Continuous Application Theorem can be applied with the function $f(x)=x^2$, obtaining:
    \[\bar D^2 \xrightarrow{d} 0  \longleftrightarrow \bar D^2 \xrightarrow{P} 0.\]
    \noindent This concludes the demonstration:
    \[\frac{n-1}{n} \hat{V}_I(P) = \frac{1}{n}\sum_{i=1}^n D_i^2 -\bar D^2 \xrightarrow{P} V_I(P).\]
    And since $\frac{n-1}{n} \xrightarrow{n\rightarrow \infty} 1$, then:
    \[\hat{V}_I(P) \xrightarrow{P} V_I(P).\]
\end{proof}

\begin{thm}
    \label{proof theorem3}
    Given a sample $\textbf{X}$ as in theorem \ref{theorem:Div_normal}, then
    \[\sqrt{n} \cdot \frac{H(\hat{P},P) - H(P)}{\sqrt{\hat{V}_I(P)}} \xrightarrow{d} N(0,1),\]
\end{thm}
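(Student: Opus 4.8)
The plan is to reduce the statement to Theorem~\ref{theorem:Div_normal} (the asymptotic normality of $\sqrt{n}\,D_S(\hat P||P)$) and Proposition~\ref{prop:prob_tend_info_disp} (consistency of $\hat V_I(P)$) by an application of Slutsky's theorem. The first step is to rewrite the quantity of interest as a product of a piece we already understand and a ``correction'' factor:
\[
\sqrt{n}\cdot\frac{H(\hat{P},P)-H(P)}{\sqrt{\hat{V}_I(P)}}
=\left(\sqrt{n}\cdot\frac{H(\hat{P},P)-H(P)}{\sqrt{V_I(P)}}\right)\cdot\sqrt{\frac{V_I(P)}{\hat{V}_I(P)}}
=\sqrt{n}\,D_S(\hat{P}||P)\cdot\sqrt{\frac{V_I(P)}{\hat{V}_I(P)}},
\]
where the last equality is just the definition of the surprise divergence (\ref{eq:Surprise Divergence}). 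This manipulation presupposes $V_I(P)>0$, i.e.\ that $P$ is not a point mass; this non-degeneracy must be assumed, since otherwise the denominator is identically zero and the statement is ill-posed.

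Second, by Theorem~\ref{theorem:Div_normal} the first factor satisfies $\sqrt{n}\,D_S(\hat{P}||P)\xrightarrow{d}N(0,1)$; this is exactly the content proved via the Central Limit Theorem applied to the i.i.d.\ information values $W_i=-\log(P(X=X_i))$.

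Third, I would argue that the correction factor converges in probability to $1$. By Proposition~\ref{prop:prob_tend_info_disp} we have $\hat{V}_I(P)\xrightarrow{P}V_I(P)$; since $V_I(P)>0$, the map $x\mapsto\sqrt{V_I(P)/x}$ is continuous at $x=V_I(P)$, so the continuous mapping theorem (for convergence in probability) yields $\sqrt{V_I(P)/\hat{V}_I(P)}\xrightarrow{P}1$. In particular $\hat{V}_I(P)>0$ with probability tending to one, so the ratio on the left-hand side is eventually well defined. Finally, Slutsky's theorem applied to the product of a sequence converging in distribution to $N(0,1)$ and a sequence converging in probability to the constant $1$ gives
\[
\sqrt{n}\cdot\frac{H(\hat{P},P)-H(P)}{\sqrt{\hat{V}_I(P)}}\xrightarrow{d}N(0,1),
\]
which is the desired conclusion.

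The only genuinely delicate point is the non-degeneracy hypothesis $V_I(P)>0$ needed to divide by $\sqrt{V_I(P)}$ and to have continuity of $x\mapsto\sqrt{V_I(P)/x}$ at the limit; once that is granted, the argument is a routine combination of Theorem~\ref{theorem:Div_normal}, Proposition~\ref{prop:prob_tend_info_disp}, the continuous mapping theorem, and Slutsky's theorem, with no further computation required.
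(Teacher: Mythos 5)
Your proof is correct and follows essentially the same route as the paper: factor the statistic as $\sqrt{n}\,D_S(\hat P\Vert P)$ times a ratio of dispersions, invoke Theorem~\ref{theorem:Div_normal} and Proposition~\ref{prop:prob_tend_info_disp}, and conclude by Slutsky. If anything you are slightly more careful than the paper's own write-up, which omits the square root on the correction factor $V_I(P)/\hat V_I(P)$ and leaves the non-degeneracy assumption $V_I(P)>0$ implicit.
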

\begin{proof}
    By proposition \ref{prop:prob_tend_info_disp} it follows that:
    \[\frac{V_I(P)}{\hat{V}_I(P)} \xrightarrow{P} 1.\]
    While, by theorem \ref{theorem:Expect_max_Divergencia_S}, it holds that:
    \[\sqrt{n} \cdot \frac{H(\hat{P},P) - H(P)}{\sqrt{V_I(P)}} \xrightarrow{d} N(0,1).\]
    Therefore, Slutsky`s theorem can be used since
    \[\sqrt{n} \cdot \frac{H(\hat{P},P) - H(P)}{\sqrt{\hat{V}_I(P)}}=\frac{V_I(P)}{\hat{V}_I(P)} \cdot \sqrt{n} \cdot \frac{H(\hat{P},P) - H(P)}{\sqrt{V_I(P)}} \xrightarrow{d} N(0,1).\]
\end{proof}

\section{Technical Appendix}
\label{appendix:Technical Appendix}
\subsection{Dealing with discretizations }

\subsubsection{Actions Space}
The main reason for discretising the domain of the action variables is to enable the agent to make decisions using a DDN, which will be then transformed into continuous values. In tables \ref{tab:Clase_Int_SF} and \ref{tab:Clase_Int_SA} we can see the correspondences between the categories of decisions and the intervals of values to which they correspond. 

\begin{table}[ht]
\centering
\caption{Correspondence between category and interval of $Step\; Forward$ discretization}
\begin{tabular}{ccc}
\cline{1-2}
 Category & Interval &  \\ \cline{1-2}
 0 & [0,0.5) &  \\ 
 1 & [0.5,1) &  \\ 
 2 & [1,1.5) &  \\ 
 3 & [1.5,2) &  \\ 
 4 & [2,2.5] &  \\ \cline{1-2}
\end{tabular}
\label{tab:Clase_Int_SF}
\end{table}

\begin{table}[ht]
\centering
\caption{Correspondence between category and interval of $Step\; Aside$ discretization}
\begin{tabular}{ccc}
\cline{1-2}
 Category & Interval &  \\ \cline{1-2}
 0 & [-2.5,-2.1) &  \\ 
 1 & [-2.1,-1.6) &  \\ 
 2 & [-1.6,-1.1) &  \\ 
 3 & [-1.1,-0.7) &  \\ 
 4 & [-0.7,-0.2) &  \\ 
 5 & [-0.2,0.2) &  \\ 
 6 & [0.2,0.7) &  \\ 
 7 & [0.7,1.1) &  \\ 
 8 & [1.1,1.6) &  \\ 
 9 & [1.6,2.1) &  \\ 
 10 & [2.1,2.5] &  \\ \cline{1-2}
\end{tabular}
\label{tab:Clase_Int_SA}
\end{table}

To go from discrete to continuous values it has been decided to select a random decision value using a uniform distribution with the appropriate bounds, depending on the discrete decision selected by the agent. For instance, if the agent decide to do $Step \; Forward $ with a value $1$, then it will returned to the Environment to execute $SA(x,y,\alpha,s)$ with $s\sim U(0.5,1)$. 

\subsubsection{Observation Space}
In the Observation Space, $Depth$ and $Heading \; Angle$ are the only observation variables that need a discretization.\\

For $Depth$, the interval $[0, d_{max} ]$ has been divided into five equal parts, where $d_{max}=\sqrt{x_{max}^2 + y_{max}^2}$

With regard to the $Heading \; Angle$, the interval $[-\pi, \pi]$ has been divided into nine equal parts.

\subsection{How the Agent and Environment interacts: Restrictors}

The environment will have two restrictors: $MapBounds$ and $BarrierImpact$.
\subsubsection{MapBounds}
The first restrictor ensures that the agent cannot exceed the limits set by the environment, that said, the agent is unable to perform an action that would result in its displacement outside of the rectangle $ABCD$, with $A = (0, 0)$, $B = (10, 0)$, $C = (10, 15)$, $D = (0, 15)$.\\

In the case that the agent performs an action that can take him out of the environment, that specific action will not be carried out.

\subsubsection{BarrierImpact}

On the other hand, the $BarrierImpact$ restrictor works in a similar way to the previous one, but with some nuances. This restrictor will detect when an action performed by the agent causes it to go beyond the barrier, impacting with any of its spikes. In this instance, rather than denying the action and preventing its execution, the value of the action will be substituted with the maximum value that does not activate the restrictor.\\

To illustrate this concept (Figure \ref{fig:BarrierImpact}), consider a scenario in which an agent decides to move forward five steps, but the barrier is three steps away in the same direction. In this case, the action of moving forward five steps will be modified by the action of moving forward three steps.\\
\begin{figure}[ht]
\centering
\includegraphics[width=0.35\textwidth]{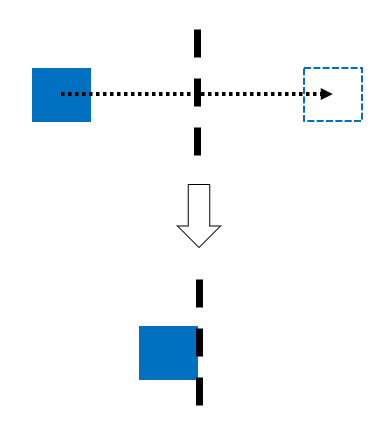}
\caption{$BarrierImpact$ restrictor}
\label{fig:BarrierImpact}
\end{figure}

\subsection{Transition Probabilities}
This section outlines the transition probability matrices employed by the agent. Each matrix consists of rows representing the values of $Obs_t$ and columns representing the values of $Obs_{t+1}$. As each probability transition $Obs_{t+1}$ may depends on the value of $Step \; Forward$ and $Step \; Aside$, the matrices are expressed in terms of the value of $SF$ and $SA$.

\subsubsection{$Depth$}
\[\begin{bmatrix}
0.9999 & \frac{SF}{6} & 0 & 0 & 0 \\
0.0001 & 1-\frac{SF}{6} & \frac{SF}{6} & 0 & 0 \\
0 & 0 & 1-\frac{SF}{6} & \frac{SF}{6} & 0 \\
0 & 0 & 0 & 1-\frac{SF}{6} & \frac{SF}{6} \\
0 & 0 & 0 & 0 & 1-\frac{SF}{6} 
\end{bmatrix}  \]

\subsubsection{$Heading \;Angle$}
\setcounter{MaxMatrixCols}{20}
\[\begin{bmatrix}
F_1 & F_1^- & 0 & 0 & 0 & 0 & 0 & 0 & 0 & 0 & F_4^+  \\
F_1^+ & F_1 & F_1^- & 0 & 0 & 0 & 0 & 0 & 0 & 0 & 0 \\
0 & F_1^+ & F_1 & F_2^- & 0 & 0 & 0 & 0 & 0 & 0 & 0 \\
0 & 0 & F_1^+ & F_1 & F_2^- & 0 & 0 & 0 & 0 & 0 & 0 \\
0 & 0 & 0 & F_2^+ & F_1 & F_2^- & 0 & 0 & 0 & 0 & 0 \\
0 & 0 & 0 & 0 & F_2^+ & F_1 & F_3^- & 0 & 0 & 0 & 0 \\
0 & 0 & 0 & 0 & 0 & F_2^+ & F_1 & F_3^- & 0 & 0 & 0 \\
0 & 0 & 0 & 0 & 0 & 0 & F_3^+ & F_1 & F_4^- & 0 & 0 \\
0 & 0 & 0 & 0 & 0 & 0 & 0 & F_3^+ & F_1 & F_4^- & 0 \\
0 & 0 & 0 & 0 & 0 & 0 & 0 & 0 & F_4^+ & F_1 & F_4^- \\
F_1^- & 0 & 0 & 0 & 0 & 0 & 0 & 0 & 0 & F_4^+ & F_1 
\end{bmatrix}\]
Where each $F_i$ function is defined as follows:
\[F_1(SF,SA)=p\cdot\left(1-\frac{|SA-5|}{5}\right) + (1-p)\cdot\left(1-\frac{SF}{6}\right),\]
\[F_1^-(SF,SA)=p\cdot max\left(0,\frac{SA-5}{5}\right) + (1-p)\cdot\frac{SF}{6},\]
\[F_1^+(SF,SA)=-p\cdot min\left(0,\frac{SA-5}{5}\right),\]
\[F_2^-(SF,SA)=-p\cdot min\left(0,\frac{SA-5}{5}\right) + (1-p)\cdot\frac{SF}{6},\]
\[F_2^+(SF,SA)=p\cdot max\left(0,\frac{SA-5}{5}\right),\]
\[F_3^-(SF,SA)=-p\cdot min\left(0,\frac{SA-5}{5}\right),\]
\[F_3^+(SF,SA)=p\cdot max\left(0,\frac{SA-5}{5}\right) + (1-p)\cdot\frac{SF}{6},\]
\[F_4^-(SF,SA)=p\cdot max\left(0,\frac{SA-5}{5}\right), \]
\[F_4^+(SF,SA)=-p\cdot min\left(0,\frac{SA-5}{5}\right) + (1-p)\cdot\frac{SF}{6}.\]
The parameter $p$  serves to balance the importance of $SA$ and $SF$ in the transition probability. After testing that the agent behaved correctly in different scenarios the parameter was set to $p=0.2$   
\subsubsection{$Target \; in \; Visual \; Field$}
\[    \begin{bmatrix}
    0.9 & 0.4\cdot\left(\frac{|sa-5|}{5}\right) + 0.01\cdot\left(1- \frac{|sa-5|}{5}\right) \\
    0.1 & 0.6\cdot\left(\frac{|sa-5|}{5}\right) + 0.99\cdot\left(1- \frac{|sa-5|}{5}\right)
    \end{bmatrix}
\]

\subsubsection{$Barrier \; Tactile$}
\[ \begin{bmatrix}
C(SF,SA) & L(SF,SA) \\
1-C(SF,SA) & 1-L(SF,SA) 
\end{bmatrix}, \]

\[L(SF,SA)= \left\{ \begin{array}{lcc}
0.5 & si & SA<5  \\ 
0.6 & si & SA=5\\
0.65 & si & SA>5 \end{array} \right.\]

The objective of the $L$ function is to introduce rightward laterality to the agent when it hits the barrier and to prevent the agent from ending the epochs without reaching the target.

\[C(SF,SA)= \left\{ \begin{array}{lcc} 
0.99 & if & SA=5\\
0.95 & otherwise &  \end{array} \right.\]

On the other hand, the objective of $C$ is to encourage the agent not to move sideways if it is not necessary.

\end{document}